\newcommand\numberthis{\addtocounter{equation}{1}\tag{\theequation}}
\newcommand\blfootnote[1]{
  \begingroup
  \renewcommand\thefootnote{}\footnote{#1}
  \addtocounter{footnote}{0}
  \endgroup
}
\newtheorem{theorem}{Theorem}[section]
\newtheorem{lemma}[theorem]{Lemma}
\numberwithin{equation}{section}
\title{Diffusion Maps for Group-Invariant Manifolds }
\author{Paulina Hoyos  and Joe Kileel\blfootnote{Authors' emails: \href{mailto:paulinah@utexas.edu}{paulinah@utexas.edu}, \href{mailto:jkileel@math.utexas.edu}{jkileel@math.utexas.edu}}}
\date{\today\footnote{Just prior to posting the first version of this preprint, the authors became aware of similar and simultaneous results achieved by Rosen, Cheng and Shkolnisky \cite{rosen2023}.  Both teams conducted their research independently.}}
\begin{document}

\maketitle

\begin{abstract}
\noindent In this article, we consider the manifold learning problem when the data set is invariant under the action of a compact Lie group $K$. 
Our approach consists in augmenting the data-induced graph Laplacian by integrating over the $K$-orbits of the existing data points, which yields a $K$-invariant graph Laplacian $L$.
We prove that $L$ can be diagonalized by using the unitary irreducible representation matrices of $K$, and we provide an explicit formula for computing its eigenvalues and eigenfunctions. 
In addition, we show that the normalized Laplacian operator $L_N$ converges to the Laplace-Beltrami operator of the data manifold with an improved convergence rate, where the improvement grows with the dimension of the symmetry group $K$. 
This work extends the steerable graph Laplacian framework of Landa and Shkolnisky from the case of $\operatorname{SO}(2)$ to arbitrary compact Lie groups.
\end{abstract}

\section{Introduction}
In manifold learning, a given data set $\mathcal{X} = \{x_1, \dots, x_N\} \subseteq \mathbb{R}^\mathcal{D}$ is assumed to lie on or near a low-dimensional manifold $\mathcal{M}$ of dimension $d \ll \mathcal{D}$. 
Tools in this field have important applications in data analysis, including dimensionality reduction \cite{6789755,doi:10.1126/science.290.5500.2319, zelesko2020earthmover}, semi-supervised learning \cite{pmlr-v5-goldberg09a, pmlr-v54-moscovich17a}, function approximation \cite{doi:10.1080/01621459.2013.827984, SOBER2021113140} and denoising \cite{SteerablePaper}. 
A prevalent approach to manifold learning is based on constructing a weighted graph whose vertices are the elements of $\mathcal{X}$ and where edges are assigned weights according to the affinity of the pairs of points in question. More precisely, edge $\{x_i,x_j\}$ is assigned the weight
$W_{ij} = K_{\varepsilon}(\vert\vert x_i - x_j \vert\vert),$
where $K_{\varepsilon}$ is a kernel function, commonly chosen to be the Gaussian kernel 
$W_{ij} = \exp\left(- \vert\vert x_i - x_j \vert\vert /\varepsilon\right).$ The graph Laplacian $L$ is then defined to act on functions $f \colon \mathcal{X} \to \mathbb{R}$ by
\[ Lf(x_i) = D_{ii}f(x_i) - \sum_{j=1}^N W_{ij}f(x_j),\]
where $D_{ii} = \sum_{j=1}^N W_{ij}$.
After suitable normalization, as $N \to \infty$ the graph Laplacian $L$ approximates the Laplace-Beltrami operator $\Delta_{\mathcal{M}}$ \cite{doi:10.1126/science.290.5500.2319} of the data manifold $\mathcal{M}$ and its eigenvectors converge to the eigenfunctions of $\Delta_\mathcal{M}$ \cite{NIPS2006_5848ad95}. Methods that use the graph Laplacian include the spectral embedding techniques of diffusion maps \cite{COIFMAN20065} and Laplacian eigenmaps \cite{NIPS2006_5848ad95}. 

There are important applications where the data points are each subject to an unknown rotation or reflection, or more generally the action of an unknown group element \cite{kondor2008group}. For example, consider 2D tomographic images in cryo-electron microscopy \cite{singer2020computational}, which are subject to 2D rigid transformations, or 3D subtomograms in cryo-electron tomography \cite{liu2022high}, which are subject to spatial rotations. For data sets like these, it is most natural to assume that the underlying data manifold is closed under the group action.  
The question arises: how should we incorporate symmetries into the graph Laplacian framework?

This paper gives a principled answer to this question.  
In practice data augmentation is common \cite{chen2020group}, however adding in random points from each observation's orbits runs the risk of becoming too computationally expensive. 
Instead, we extend a framework of Landa and Shkolnisky \cite{SteerablePaper} to arbitrary compact Lie groups.  This achieves infinite data augmentation efficiently, through analytic integration over the group orbits. We construct the $K$-invariant graph Laplacian $L$ where $K$ is the underlying symmetry group in Section \ref{sec: graph Laplacian}.
The main result is Theorem \ref{thm: spectrum of L}, where we characterize the eigenvalues and eigenfunctions of $L$ and show how to compute them. 
Then in Section \ref{sec: convergence of graph Laplacian} we prove Theorem \ref{thm:rate}, which states that the normalized graph Laplacian $L_N = D^{-1}L$ converges pointwise to the Laplace-Beltrami operator on $\mathcal{M}$. Moreover, we prove that the variance term of the convergence scales with $d - \mathrm{dim}(K)$ rather than $d$ in the case of symmetry-unaware diffusion maps. 
This implies an improved sample complexity rate for computations with the graph Laplacian.

\section{Setting}
Let $\mathcal{M}$ be a compact $d$-dimensional smooth manifold without boundary embedded in $\mathbb{R}^\mathcal{D}$.  Consider data points $\{ x_1, x_2, \dots, x_N \} \subseteq \mathbb{R}^\mathcal{D}$ sampled independently from the uniform probability distribution $p(x)$ on $\mathcal{M}$. 

To encode the symmetries of $\mathcal{M}$, we let $K$ be a compact Lie group acting linearly and by isometries on $\mathbb{R}^\mathcal{D}$, and we assume $\mathcal{M}$ is $K$-invariant: for every $x \in \mathcal{M}$ and $\kappa \in K$, it holds that $\kappa \cdot x \in \mathcal{M}$. Moreover, we assume that the action of $K$ on $\mathcal{M}$ is \textit{generically free}, meaning that the set of points in $\mathcal{M}$ whose stabilizer is nontrivial has measure zero. In other words, if
$\mathcal{M}^\prime$ is the subset of $\mathcal{M}$ where $K$ acts freely, then $\mathcal{M} \setminus \mathcal{M}^\prime$ has measure zero. In particular, integrating over $\mathcal{M}^\prime$ is the same as integrating over $\mathcal{M}$, and we will use this fact without further mention in Section \ref{sec: convergence of graph Laplacian}.

\section{The $K$-invariant graph Laplacian and its spectrum}\label{sec: graph Laplacian}
Take $\Gamma \colonequals \{1, 2, \dots, N\} \times K$, and consider the space $\mathcal{H} = \mathcal{L}^2(\Gamma)$ of square integrable functions $f \colon \Gamma \to \mathbb{R}$ equipped with the inner product 
\[ \langle g, f \rangle \colonequals \sum_{i=1}^N \int_{K} g^\ast(i, \kappa) f(i, \kappa) d\kappa, \]
where $d\kappa$ is the normalized Haar measure on $K$. This is an infinite-dimensional complex Hilbert space. Note that, for any $f \in \mathcal{H}$, we can write $f(i, \kappa) = f_i(\kappa)$ with $\{f_i\}_{i=1}^N \in \mathcal{L}^2(K)$. Hence, we can view a function in $\mathcal{H}$ as an $N$-dimensional vector where each entry is a square-integrable function on $K$. 

For any pair of data points $x_i, x_j$ and any pair of elements $\kappa, \lambda$ in $K$, we define the affinity between the elements $\kappa \cdot x_i$ and $\lambda \cdot x_j$ in $\mathbb{R}^\mathcal{D}$ as
\begin{equation}\label{eq: affinity of two points}
    W_{ij}(\kappa, \lambda) \colonequals \exp \left( - \vert\vert {\kappa}\cdot x_i - {\lambda} \cdot x_j \vert\vert^2 / \varepsilon \right).
\end{equation}
Then we define the affinity operator $W \colon \mathcal{H} \to \mathcal{H}$ by
\[ Wf(i, \kappa) = \sum_{j=1}^N \int_{K} W_{ij}(\kappa, \lambda) f(j, \lambda) d\lambda \]
for any function $f \in \mathcal{H}$, and any $(i, \kappa) \in \Gamma$. This operator defines a weighted graph $G = (V, E, W)$ whose vertices $V$ are the data points $\{ x_1, \dots, x_n \}$ together with all the points $\kappa \cdot x_i$, $1 \leq i \leq N$ and $\kappa \in K$, in their orbits under the action of $K$; and the set of edges $E$ consists of all the pairs $\{\kappa \cdot x_i, \lambda \cdot x_j\}$ for which $W_{i,j}(\kappa, \lambda) > 0$.
\begin{lemma}
For any group elements $\kappa, \lambda \in K$, it holds that
\begin{align*}
    W_{ij}(\kappa, \lambda) = W_{ij}(\mathrm{Id}, \kappa^{-1} \lambda),
\end{align*}
where $\mathrm{Id}$ is the identity element of $K$.
\end{lemma}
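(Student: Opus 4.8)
The plan is to unwind the definition of $W_{ij}(\kappa,\lambda)$ in \eqref{eq: affinity of two points} and use that $K$ acts by isometries on $\mathbb{R}^\mathcal{D}$. First I would write out
\[
W_{ij}(\kappa,\lambda) = \exp\!\left(-\lVert \kappa\cdot x_i - \lambda\cdot x_j\rVert^2/\varepsilon\right),
\]
and observe that since each $\kappa \in K$ acts as a linear isometry, for any vectors $u,v \in \mathbb{R}^\mathcal{D}$ we have $\lVert \kappa\cdot u - \kappa\cdot v\rVert = \lVert \kappa\cdot(u-v)\rVert = \lVert u-v\rVert$. In particular, the action of $\kappa^{-1}$ is also an isometry, so applying $\kappa^{-1}$ to both arguments of the norm leaves it unchanged.

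The key step is then the computation
\[
\lVert \kappa\cdot x_i - \lambda\cdot x_j\rVert = \lVert \kappa^{-1}\cdot(\kappa\cdot x_i) - \kappa^{-1}\cdot(\lambda\cdot x_j)\rVert = \lVert x_i - (\kappa^{-1}\lambda)\cdot x_j\rVert,
\]
where the last equality uses that the action is a group action, so $\kappa^{-1}\cdot(\kappa\cdot x_i) = (\kappa^{-1}\kappa)\cdot x_i = \mathrm{Id}\cdot x_i = x_i$ and similarly $\kappa^{-1}\cdot(\lambda\cdot x_j) = (\kappa^{-1}\lambda)\cdot x_j$. Squaring both sides, dividing by $\varepsilon$, negating and exponentiating gives exactly $W_{ij}(\kappa,\lambda) = W_{ij}(\mathrm{Id},\kappa^{-1}\lambda)$.

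This argument is essentially a one-line calculation, so there is no substantial obstacle; the only things to be careful about are that the hypotheses guarantee $K$ acts by \emph{linear} isometries (so the norm is genuinely preserved and the action on differences behaves well) and that we are using the group-action axioms correctly when rewriting $\kappa^{-1}\cdot(\lambda\cdot x_j)$ as $(\kappa^{-1}\lambda)\cdot x_j$. I would state the isometry property explicitly as the one fact being invoked, and then present the displayed chain of equalities as the whole proof.
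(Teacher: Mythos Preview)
Your proposal is correct and follows exactly the same approach as the paper's proof: both apply the isometry $\kappa^{-1}$ to obtain $\lVert \kappa\cdot x_i - \lambda\cdot x_j\rVert^2 = \lVert x_i - (\kappa^{-1}\lambda)\cdot x_j\rVert^2$. You simply spell out the group-action and linearity details that the paper leaves implicit.
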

\begin{proof}
This follows from the fact that $K$ acts by isometries on $\mathbb{R}^\mathcal{D},$ since acting by $k^{-1}$ gives
\begin{align*}
    \vert\vert \kappa \cdot x_i - \lambda \cdot x_j\vert\vert^2 = \vert\vert x_i - \kappa^{-1} \lambda\cdot x_j\vert\vert^2.
\end{align*}
\end{proof}

Let $D$ be the diagonal matrix with diagonal entries given by 
\[ D_{ii} = \sum_{j=1}^N \int_{K} W_{ij}(\mathrm{Id}, \lambda) d\lambda.\]
Note that the entries $D_{ii}$ are constant. In addition, as $d\lambda$ is the Haar measure  on $K$, it is invariant under left translations, so we have that
\[ D_{ii} = \sum_{j=1}^N \int_{K} W_{ij}(\mathrm{Id}, \lambda) d\lambda = \sum_{j=1}^N \int_{K} W_{ij}(\kappa, \lambda) d\lambda. \]
That is, $D_{ii}$ is independent of the group element $\kappa$ in the first argument of $W_{ij}$.

Define the graph Laplacian $L \colon \mathcal{H} \to \mathcal{H}$ by $Lf = Df - Wf$. More explicitly, it acts on a function $f \in \mathcal{H}$ as
\[ Lf(i, \kappa) = f(i, \kappa) \sum_{j=1}^N \int_K W_{ij}(\kappa, \lambda) d\lambda  - \sum_{j=1}^N \int_K W_{ij}(\kappa, \lambda) f(j, \lambda) d\lambda. \]
\begin{lemma}\label{Claim: quadratic form of L}
    The graph Laplacian $L$ admits the quadratic form 
    \[ \langle f, Lf \rangle = \frac{1}{2 }\sum_{i=1}^N \sum_{j=1}^N \int_{K} \int_{K} W_{ij}(\kappa,\lambda) \vert f_i(\kappa) - f_j(\lambda)\vert^2 d\kappa d\lambda. \]
\end{lemma}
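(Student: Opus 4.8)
The plan is to expand $\langle f, Lf\rangle = \langle f, Df\rangle - \langle f, Wf\rangle$ directly from the definition of the inner product on $\mathcal{H}$ and then match the result against the expansion of the squared modulus on the right-hand side. The one structural fact the argument relies on is the symmetry
\[ W_{ij}(\kappa,\lambda) = W_{ji}(\lambda,\kappa), \]
which is immediate from the definition \eqref{eq: affinity of two points} since $\vert\vert \kappa\cdot x_i - \lambda\cdot x_j\vert\vert = \vert\vert \lambda\cdot x_j - \kappa\cdot x_i\vert\vert$. I would record this first.

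Next, using that $D_{ii} = \sum_{j=1}^N \int_K W_{ij}(\kappa,\lambda)\, d\lambda$ is independent of $\kappa$ (established just before the statement), the diagonal term becomes
\[ \langle f, Df\rangle = \sum_{i=1}^N D_{ii} \int_K |f_i(\kappa)|^2\, d\kappa = \sum_{i=1}^N \sum_{j=1}^N \int_K \int_K W_{ij}(\kappa,\lambda)\, |f_i(\kappa)|^2\, d\lambda\, d\kappa, \]
while unwinding the definition of the affinity operator gives
\[ \langle f, Wf\rangle = \sum_{i=1}^N \sum_{j=1}^N \int_K \int_K W_{ij}(\kappa,\lambda)\, f_i^\ast(\kappa) f_j(\lambda)\, d\kappa\, d\lambda. \]

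Finally I would expand $|f_i(\kappa) - f_j(\lambda)|^2 = |f_i(\kappa)|^2 + |f_j(\lambda)|^2 - f_i^\ast(\kappa) f_j(\lambda) - f_j^\ast(\lambda) f_i(\kappa)$ inside the double sum and double integral on the right-hand side. Relabeling the summation indices $i \leftrightarrow j$ together with the integration variables $\kappa \leftrightarrow \lambda$ and invoking the symmetry above shows that the $|f_j(\lambda)|^2$ contribution equals the $|f_i(\kappa)|^2$ contribution and that the two cross-terms are equal; hence the prefactor $\tfrac{1}{2}$ is absorbed and the right-hand side collapses to $\langle f, Df\rangle - \langle f, Wf\rangle = \langle f, Lf\rangle$. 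I do not anticipate a genuine obstacle here: the only points needing a word of care are the use of Fubini to interchange the finite sums with the Haar integrals (legitimate since $W_{ij}$ is bounded and $K$ has unit Haar measure) and bookkeeping the complex conjugate coming from the Hermitian inner product, so that this symmetrization step is the one place to proceed carefully.
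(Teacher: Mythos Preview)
Your proposal is correct and follows essentially the same route as the paper's proof: both compute $\langle f, Lf\rangle$ by expanding the definition and then symmetrize using $W_{ij}(\kappa,\lambda)=W_{ji}(\lambda,\kappa)$ to produce the factor $\tfrac12$ and the full square. The only cosmetic difference is that you work from the right-hand side back to $\langle f,Df\rangle-\langle f,Wf\rangle$, whereas the paper goes forward from $\langle f,Lf\rangle$ to the squared-difference form; your explicit mention of the symmetry and of the two conjugate cross-terms is in fact a bit cleaner than the paper's presentation, which leaves that step implicit.
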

\begin{proof}
\begin{align*}
    \langle f, Lf \rangle 
    &= \sum_{i=1}^N \int_K f^\ast(i, \kappa) Lf(i, \kappa) d\kappa \\
    &= \sum_{i=1}^N \int_K f^\ast(i, \kappa) \left[ f(i, \kappa) \sum_{j=1}^N \int_K W_{ij}(\kappa, \lambda) d\lambda  - \sum_{j=1}^N \int_K W_{ij}(\kappa, \lambda) f(j, \lambda) d\lambda \right] \\
    &= \sum_{i=1}^N \sum_{j=1}^N \int_K \int_K W_{ij}(\kappa, \lambda) \left[ f^\ast(i, \kappa)f(i, \kappa) - f^\ast(i, \kappa)f(j,\lambda)\right] d\kappa d\lambda\\
    &= \frac{1}{2}\sum_{i=1}^N \sum_{j=1}^N \int_K \int_K W_{ij}(\kappa, \lambda) \left[ f^\ast(i, \kappa)f(i, \kappa) - 2f^\ast(i, \kappa)f(j,\lambda) + f^\ast(j,\lambda)f(j,\lambda)\right] d\kappa d\lambda\\
    &= \frac{1}{2}\sum_{i=1}^N \sum_{j=1}^N \int_K \int_K W_{ij}(\kappa, \lambda) \vert f(i,\kappa) - f(j,\lambda) \vert^2 d\kappa d\lambda.
\end{align*}    
\end{proof}
From here we conclude that $L$ is positive semidefinite, since $\langle f, Lf \rangle \geq 0$ for all $f \in \mathcal{H}$. 

We are interested in finding the eigenvalues and eigenvectors of the graph Laplacian $L$. In doing so, we need to take the Fourier transform of each  $W_{ij}(\kappa, \lambda)$, which involves the representation theory of $K$ \cite{sternberg1995group, chirikjian2016harmonic}. For each $i,j \in \{1, \dots, N\}$, consider the Fourier transform
\begin{align*}
    \widehat{W}_{ij}^{(\ell)} = \int_{K} W_{ij}(\text{Id}, \kappa^{-1} \lambda) {U_{\ell}(\kappa^{-1}\lambda)} d(\kappa^{-1} \lambda) = \int_{K} W_{ij}(\text{Id}, \kappa) {U_{\ell}(\kappa)} d\kappa,
\end{align*}
where $U_{\ell}(\kappa^{-1}\lambda)$ is the $\ell$-th irreducible unitary representation matrix for $K$. Note that $\widehat{W}_{ij}^{(\ell)}$ is an $\mathrm{dim}E_\ell \times \mathrm{dim}E_\ell$ matrix which does not depend on $K$. More explicitly, the entries $\left[ \widehat{W}_{ij}^{(\ell)} \right]_{mn}$ of $  \widehat{W}_{ij}^{(\ell)}$ are given by
\[\left[\widehat{W}_{ij}^{(\ell)} \right]_{mn} = \int_{K} W_{ij}(\text{Id}, \kappa) {U_{\ell}(\kappa)}_{mn}  d\kappa,\]
where ${U_{\ell}(\kappa)}_{mn}$ is the $(m,n)$-entry of the matrix $U_\ell(\kappa)$. Using the Fourier transform, we can expand $W_{ij}(\kappa,\lambda)$ in a Fourier series as
\begin{align*}
    W_{ij}(\kappa, \lambda) &= W_{ij}(\text{Id}, \kappa^{-1} \lambda) 
    = \sum_{\ell \in \mathcal{I}} \text{dim} E_\ell \sum_{m = 1}^{\mathrm{dim}E_\ell} \sum_{n = 1}^{\mathrm{dim}E_\ell} \left[\widehat{W}_{ij}^{(\ell)} \right]_{mn} {U_{\ell}((\kappa^{-1}\lambda)^{-1})}_{mn}.  
\end{align*}
Next, form the block matrix $\widehat{W}^{(\ell)}$ by putting $\widehat{W}_{ij}^{(\ell)}$ in the $(i,j)$-th block. This is a matrix of size $N(\mathrm{dim}E_{\ell}) \times N(\text{dim}E_{\ell})$. 
\begin{lemma}\label{lem: hat(W)ell Hermitian}
The block matrix $\widehat{W}^{(\ell)}$ is Hermitian.
\end{lemma}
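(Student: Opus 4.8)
The plan is to verify directly that the $(j,i)$-block of $\widehat{W}^{(\ell)}$ equals the conjugate transpose of its $(i,j)$-block, i.e.\ that $\big(\widehat{W}_{ij}^{(\ell)}\big)^{\ast} = \widehat{W}_{ji}^{(\ell)}$, since this is precisely what it means for the full block matrix to be Hermitian. Two ingredients feed into the computation: the symmetry of the affinity kernel, and standard facts about unitary representations together with invariance properties of the Haar measure.

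First I would record the symmetry $W_{ij}(\kappa,\lambda) = W_{ji}(\lambda,\kappa)$, which is immediate from the definition \eqref{eq: affinity of two points} because $\|\kappa\cdot x_i - \lambda\cdot x_j\| = \|\lambda\cdot x_j - \kappa\cdot x_i\|$. Combining this with the first lemma of this section (the identity $W_{ij}(\kappa,\lambda) = W_{ij}(\mathrm{Id},\kappa^{-1}\lambda)$) yields $W_{ij}(\mathrm{Id},\mu) = W_{ji}(\mathrm{Id},\mu^{-1})$ for every $\mu \in K$, which is the kernel symmetry I will use.

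Next, starting from $\widehat{W}_{ij}^{(\ell)} = \int_{K} W_{ij}(\mathrm{Id},\kappa)\,U_{\ell}(\kappa)\,d\kappa$ and using that $W_{ij}$ is real-valued, I would push the conjugate transpose inside the integral to get $\big(\widehat{W}_{ij}^{(\ell)}\big)^{\ast} = \int_{K} W_{ij}(\mathrm{Id},\kappa)\,U_{\ell}(\kappa)^{\ast}\,d\kappa$. Since $U_{\ell}$ is a unitary representation, $U_{\ell}(\kappa)^{\ast} = U_{\ell}(\kappa)^{-1} = U_{\ell}(\kappa^{-1})$. I then substitute $\mu = \kappa^{-1}$; because a compact Lie group is unimodular, the Haar measure is inversion-invariant, so $d\kappa = d\mu$ and the integral becomes $\int_{K} W_{ij}(\mathrm{Id},\mu^{-1})\,U_{\ell}(\mu)\,d\mu$. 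Applying the kernel symmetry to replace $W_{ij}(\mathrm{Id},\mu^{-1})$ by $W_{ji}(\mathrm{Id},\mu)$ gives $\int_{K} W_{ji}(\mathrm{Id},\mu)\,U_{\ell}(\mu)\,d\mu = \widehat{W}_{ji}^{(\ell)}$, completing the argument.

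I do not expect a genuine obstacle here; the only points needing a little care are the inversion-invariance of the Haar measure (which holds because compact Lie groups are unimodular) and the fact that the homomorphism property combined with unitarity gives $U_{\ell}(\kappa)^{\ast} = U_{\ell}(\kappa^{-1})$. Everything else is a direct manipulation of the defining integral, so the proof should be short.
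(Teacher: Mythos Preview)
Your proposal is correct and follows essentially the same argument as the paper: both use the kernel symmetry $W_{ij}(\mathrm{Id},\kappa)=W_{ji}(\mathrm{Id},\kappa^{-1})$, the unitarity identity $U_\ell(\kappa)^\ast=U_\ell(\kappa^{-1})$, and the inversion-invariance of the Haar measure. The only cosmetic difference is that you work at the matrix level while the paper tracks individual entries; you are also a bit more explicit about why each step is justified (real-valuedness of $W_{ij}$, unimodularity of compact groups), which is fine.
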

\begin{proof}
This follows from a direct computation, using the fact that $U_l(\kappa)$ is a unitary matrix.
\begin{align*}
  \left[\widehat{W}^{(\ell)}_{ij} \right]^\ast_{mn}  &= \int_K W_{ij}(Id, \kappa)U_\ell(\kappa)^\ast_{mn}d\kappa \\
  &= \int_K W_{ji}(Id, \kappa^{-1})U_\ell(\kappa^{-1})_{nm}d\kappa \\
   &= \int_K W_{ji}(Id, \kappa)U_\ell(\kappa)_{nm}d\kappa \\
   &= \left[ \widehat{W}^{(\ell)}_{ji} \right]_{nm}.
\end{align*}
\end{proof}
Form the matrix $S_\ell = D \otimes I_{\text{dim}E_{\ell}} - \widehat{W}^{(\ell)}$, where $D \otimes I_{\text{dim}E_{\ell}}$ denotes the Kronecker product of the matrices $D$ and $I_{\text{dim}E_{\ell}}$. As the following theorem proves, we can compute the eigenvalues and eigenvectors of the graph Laplacian $L$ from those of  $S_\ell$.
\begin{theorem}\label{thm: spectrum of L}
The graph Laplacian $L$ admits a sequence of nonnegative   eigenvalues $\{ \lambda_{\ell,1}, \dots, \lambda_{\ell, N(\text{dim}E_{\ell})}\}_{\ell \in \mathcal{I}}$ and a sequence of corresponding eigenfunctions $\{ \Phi_{m}^{(\ell,1)}, \dots, \Phi_{m}^{(\ell,N\mathrm{dim}E_{\ell})}\}_{\ell \in \mathcal{I}, 1\leq m \leq \mathrm{dim}E_\ell}$ which are orthogonal and complete over $\mathcal{L}^2(\mathcal{H})$ and are given by
\[ \Phi_{m}^{(\ell, s)}(i, \kappa) =  \sum_{n = 1}^{\mathrm{dim}E_\ell} U_\ell(\kappa)_{mn}v^{(\ell,s)}_{(i-1)\text{dim}E_\ell+ n},  \]
where $(\lambda_{\ell, s}, v^{(\ell,s)})$ is the $s$-th eigenvalue-eigenvector pair of the matrix $S_\ell = D \otimes I_{\text{dim}E_{\ell}} - \widehat{W}^{(\ell)}$, for $1\leq s \leq N \mathrm{dim}E_\ell$. 
\end{theorem}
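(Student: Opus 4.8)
The plan is to use the Peter--Weyl theorem to split $\mathcal{H}$ into finite-dimensional, $L$-invariant pieces on which $L$ acts through the matrices $S_\ell$. Recall that Peter--Weyl \cite{sternberg1995group} gives an orthogonal basis of $\mathcal{L}^2(K)$ consisting of the matrix coefficients $U_\ell(\cdot)_{mn}$, $\ell \in \mathcal{I}$, $1 \le m,n \le \dim E_\ell$, with $\int_K \overline{U_{\ell'}(\kappa)_{m'n'}}\, U_\ell(\kappa)_{mn}\, d\kappa = (\dim E_\ell)^{-1}\delta_{\ell\ell'}\delta_{mm'}\delta_{nn'}$. First I would fix $\ell$ and a row index $m$ and introduce the subspace $V_{\ell,m} \subseteq \mathcal{H}$ of functions $(i,\kappa) \mapsto \sum_{n=1}^{\dim E_\ell} U_\ell(\kappa)_{mn}\,c_{(i-1)\dim E_\ell+n}$, $c \in \mathbb{C}^{N\dim E_\ell}$. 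Because the functions $U_\ell(\cdot)_{m1},\dots,U_\ell(\cdot)_{m,\dim E_\ell}$ are orthogonal in $\mathcal{L}^2(K)$, the map $c \mapsto \Phi$ is a linear isomorphism onto $V_{\ell,m}$, and Peter--Weyl shows that $\mathcal{H} \cong \bigoplus_{i=1}^N \mathcal{L}^2(K)$ decomposes orthogonally as $\mathcal{H} = \overline{\bigoplus_{\ell \in \mathcal{I}} \bigoplus_{m=1}^{\dim E_\ell} V_{\ell,m}}$ (for fixed $\ell$, the spaces $V_{\ell,1},\dots,V_{\ell,\dim E_\ell}$ together span the $\ell$-isotypic component of each copy of $\mathcal{L}^2(K)$).

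The key step is to compute $L$ on such a function and recognize $S_\ell$. Writing $f(i,\kappa) = \sum_n U_\ell(\kappa)_{mn}c_{(i-1)\dim E_\ell+n}$, I would expand $Lf(i,\kappa) = D_{ii}f(i,\kappa) - \sum_j \int_K W_{ij}(\kappa,\lambda)f(j,\lambda)\,d\lambda$; use $W_{ij}(\kappa,\lambda) = W_{ij}(\mathrm{Id},\kappa^{-1}\lambda)$ (the first lemma of Section~\ref{sec: graph Laplacian}), substitute $\mu = \kappa^{-1}\lambda$ and invoke left-invariance of the Haar measure to turn the $\lambda$-integral into an integral over $\mu$; and then apply $U_\ell(\kappa\mu)_{mn} = \sum_p U_\ell(\kappa)_{mp}U_\ell(\mu)_{pn}$ to factor $U_\ell(\kappa)_{mp}$ out of the integral. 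The leftover integral $\int_K W_{ij}(\mathrm{Id},\mu)U_\ell(\mu)_{pn}\,d\mu$ is, by definition, the $(p,n)$-entry of $\widehat{W}_{ij}^{(\ell)}$, so after collecting terms (and writing $D_{ii}c_{(i-1)\dim E_\ell+p} = ((D \otimes I_{\dim E_\ell})c)_{(i-1)\dim E_\ell+p}$) one gets
\[ Lf(i,\kappa) = \sum_{p=1}^{\dim E_\ell} U_\ell(\kappa)_{mp}\,(S_\ell c)_{(i-1)\dim E_\ell+p}, \qquad S_\ell = D \otimes I_{\dim E_\ell} - \widehat{W}^{(\ell)}. \]
Thus $L$ preserves $V_{\ell,m}$ and acts there as $S_\ell$; in particular, if $S_\ell v^{(\ell,s)} = \lambda_{\ell,s}v^{(\ell,s)}$ then $\Phi_m^{(\ell,s)}$ (which is exactly the function attached to $c = v^{(\ell,s)}$) satisfies $L\Phi_m^{(\ell,s)} = \lambda_{\ell,s}\Phi_m^{(\ell,s)}$.

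To finish, I would deduce orthogonality, completeness and nonnegativity. Since $\widehat{W}^{(\ell)}$ is Hermitian (Lemma~\ref{lem: hat(W)ell Hermitian}) and $D \otimes I_{\dim E_\ell}$ is real diagonal, $S_\ell$ is Hermitian, so its eigenvectors can be chosen to form an orthonormal basis $\{v^{(\ell,s)}\}_{s=1}^{N\dim E_\ell}$ of $\mathbb{C}^{N\dim E_\ell}$; then expanding $\langle \Phi_{m'}^{(\ell',s')},\Phi_m^{(\ell,s)}\rangle$ and applying the Peter--Weyl orthogonality relations collapses everything to $(\dim E_\ell)^{-1}\delta_{\ell\ell'}\delta_{mm'}\delta_{ss'}$, which gives orthogonality. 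For completeness, for each fixed $\ell,m$ the $v^{(\ell,s)}$ span $\mathbb{C}^{N\dim E_\ell}$, hence the $\Phi_m^{(\ell,s)}$ span $V_{\ell,m}$, and ranging over all $m$ and $\ell$ recovers all of $\mathcal{H}$ by the orthogonal decomposition above. Nonnegativity of $\lambda_{\ell,s}$ follows from $\lambda_{\ell,s}\|\Phi_m^{(\ell,s)}\|^2 = \langle \Phi_m^{(\ell,s)}, L\Phi_m^{(\ell,s)}\rangle \ge 0$ (Lemma~\ref{Claim: quadratic form of L}) together with $\Phi_m^{(\ell,s)} \ne 0$.

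I expect the only delicate point to be the central index bookkeeping: keeping the block indices $i,j$, the fixed row index $m$, the summed column index $n$, and the auxiliary index $p$ coming from $U_\ell(\kappa\mu) = U_\ell(\kappa)U_\ell(\mu)$ straight, so that $L|_{V_{\ell,m}}$ is identified with $S_\ell$ itself and not with its transpose or adjoint, and so that the block indexing of $\widehat{W}^{(\ell)}$ matches the indexing used in the stated formula for $\Phi_m^{(\ell,s)}$. The remaining ingredients — the Peter--Weyl theorem and the Hermitian and positive-semidefinite facts from Lemmas~\ref{lem: hat(W)ell Hermitian} and~\ref{Claim: quadratic form of L} — are already available.
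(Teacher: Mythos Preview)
Your proposal is correct and follows the same overall architecture as the paper: decompose $\mathcal{H}$ via Peter--Weyl, identify the action of $L$ on each isotypic piece with the finite matrix $S_\ell$, and then read off eigenfunctions, orthogonality, completeness, and nonnegativity from Lemmas~\ref{lem: hat(W)ell Hermitian} and~\ref{Claim: quadratic form of L}. The one substantive difference is in how you verify that $L\Phi_m^{(\ell,s)} = \lambda_{\ell,s}\Phi_m^{(\ell,s)}$: the paper (in Lemma~\ref{lem: form of eigenvalues of L}) expands $W_{ij}(\kappa,\lambda)$ as a full Fourier series over \emph{all} irreducibles $\ell'$ and then uses Schur orthogonality of the $U_{\ell'}$ against $U_\ell$ to collapse the sum to a single $\ell$, whereas you substitute $\mu=\kappa^{-1}\lambda$ and apply the homomorphism identity $U_\ell(\kappa\mu)=U_\ell(\kappa)U_\ell(\mu)$ directly. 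Your route is shorter and avoids the auxiliary Fourier expansion of $W_{ij}$ altogether; the paper's route, on the other hand, makes explicit how the block $\widehat{W}^{(\ell)}$ arises as a Fourier coefficient, which ties in more visibly with the surrounding discussion. Either way the index bookkeeping you flag is indeed the only place where care is needed, and your description of it is accurate.
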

\begin{proof}
We begin by noting that the matrix $S_\ell = D \otimes I_{\text{dim}E_{\ell}} - \widehat{W}^{(\ell)}$ is diagonalizable since, as proved in Lemma \ref{lem: hat(W)ell Hermitian}, $\widehat{W}^{(\ell)}$ is a Hermitian matrix. For  $1 \leq s \leq N \mathrm{dim}E_\ell,$ let $\left(\lambda_{\ell,s}, v^{(\ell,s)}\right)$ be the $s$-th eigenvalue-eigenvector pair of $S_\ell$, chosen so that the vectors $\{v^{(\ell,s)} \}_{1 \leq s \leq N \mathrm{dim}E_\ell}$ are orthonormal. By Lemma \ref{lem: form of eigenvalues of L} below,
$\Phi_{m}^{(\ell, s)}$ is an eigenfunction of $L$ with eigenvalue $\lambda_{\ell,s}$, where  $1 \leq s \leq N\mathrm{dim}E_\ell$ and $1 \leq m \leq \mathrm{dim}E_\ell$. 
Moreover, the eigenvalues $\lambda_{\ell,s}$ are nonnegative since $L$ is a positive semi-definite operator, as it is immediate from Claim \ref{Claim: quadratic form of L}.

Now let us prove the completeness of the eigenfunctions $\{ \Phi_{m}^{(\ell,1)}, \dots, \Phi_{m}^{(\ell,N\mathrm{dim}E_{\ell})}\}_{\ell, 1\leq m \leq \mathrm{dim}E_\ell}$. Note that
\[\mathcal{L}^2(\Gamma) = \mathcal{L}^2(K)\otimes \mathbb{R}^N = \bigoplus_{\ell \in \mathcal{I}} \left( E_\ell^{\oplus \mathrm{dim}E_\ell} \otimes \mathbb{R}^N\right),\]
where the second equality comes from the Peter-Weyl Theorem. For each $\ell$, let us see that the set $\{   \Phi_{m}^{(\ell,1)}, \dots, \Phi_{m}^{(\ell,N\text{dim}E_{\ell})}\}_{1\leq m \leq \mathrm{dim}E_\ell}$ is an orthogonal basis for the subspace $E_\ell^{\oplus \mathrm{dim}E_\ell} \otimes \mathbb{R}^N$ of $\mathcal{L}^2(\Gamma)$. We have $N(\mathrm{dim}E_\ell)^2 = \mathrm{dim}(E_\ell^{\oplus \mathrm{dim}E_\ell} \otimes \mathbb{R}^N)$ eigenfunctions, and the following computation proves they are orthogonal.
\begin{align*}
    \langle \Phi_m^{(\ell, s)} , \Phi_{m^\prime}^{(\ell, s^\prime)} \rangle &= \sum_{i=1}^N \int_K (\Phi_m^{(\ell, s)}(i, \kappa))^\ast \Phi_{m^\prime}^{(\ell, s^\prime)}(i,\kappa) d\kappa \\
    &= \sum_{i=1}^N \int_K \sum_{n = 1}^{\mathrm{dim}E_\ell} {U_\ell(\kappa)^\ast_{mn}} ({v^{(\ell,s)}_{(i-1)\text{dim}E_\ell+ n}})^\ast \sum_{n^\prime = 1}^{\mathrm{dim}E_\ell} U_\ell(\kappa)_{m^\prime n^\prime}v^{(\ell,s^\prime)}_{(i-1)\text{dim}E_\ell+ n^\prime} d\kappa \\
    &= \sum_{i=1}^N \sum_{n = 1}^{\mathrm{dim}E_\ell} \sum_{n^\prime = 1}^{\mathrm{dim}E_\ell}({v^{(\ell,s)}_{(i-1)\text{dim}E_\ell+ n}})^\ast v^{(\ell,s^\prime)}_{(i-1)\text{dim}E_\ell+ n^\prime}
    \int_K    U_\ell(\kappa^{-1})_{mn} U_\ell(\kappa)_{m^\prime n^\prime} d\kappa \\    
    &= \sum_{i=1}^N \sum_{n = 1}^{\mathrm{dim}E_\ell} \sum_{n^\prime = 1}^{\mathrm{dim}E_\ell}({v^{(\ell,s)}_{(i-1)\text{dim}E_\ell+ n}})^\ast v^{(\ell,s^\prime)}_{(i-1)\text{dim}E_\ell+ n^\prime} \frac{1}{\mathrm{dim}E_\ell} \delta_{m m^\prime} \delta_{n n^\prime}\\
    &= \frac{1}{\mathrm{dim}E_\ell}\delta_{m m^\prime} \sum_{i=1}^N \sum_{n = 1}^{\mathrm{dim}E_\ell} ({v^{(\ell,s)}_{(i-1)\text{dim}E_\ell+ n}} )^\ast v^{(\ell,s^\prime)}_{(i-1)\text{dim}E_\ell+ n^\prime}  \\
    &= \frac{1}{\mathrm{dim}E_\ell}\delta_{m m^\prime} \langle v^{(\ell,s)}, v^{(\ell,s^\prime)} \rangle_{\mathbb{R}^{N\text{dim}E_\ell}} \\
    &= \frac{1}{\mathrm{dim}E_\ell}\delta_{m m^\prime} \delta_{s s^\prime}.
\end{align*}
\end{proof}

\begin{lemma}\label{lem: form of eigenvalues of L}
If $v$ is an eigenvector of $S_\ell = D \otimes I_{\text{dim}E_{\ell}} - \widehat{W}^{(\ell)}$ with eigenvalue $\lambda$, then for all $ 1 \leq m \leq \mathrm{dim}E_\ell$ the function $\Phi_m \in \mathcal{L}^2(\mathcal{H})$ given by
\[ \Phi_m(i, \kappa) = \sum_{n = 1}^{\mathrm{dim}E_\ell} U_\ell(\kappa)_{mn}v_{(i-1)\mathrm{dim}E_\ell+ n}  \]
is an eigenfunction of $L$ with eigenvalue $\lambda$.
\end{lemma}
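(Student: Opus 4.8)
The plan is to verify the eigenvalue equation $L\Phi_m = \lambda\Phi_m$ by a direct computation, in which the integral over $K$ is collapsed to a finite-dimensional linear-algebra identity by means of the orthogonality relations for matrix coefficients. First I would split $L\Phi_m(i,\kappa) = D_{ii}\Phi_m(i,\kappa) - \sum_{j=1}^N\int_K W_{ij}(\kappa,\lambda)\Phi_m(j,\lambda)\,d\lambda$ using the explicit formula for $L$. The diagonal part is immediate: since $D_{ii}$ is a scalar and $D\otimes I_{\dim E_\ell}$ is block-diagonal, it equals $\sum_{n=1}^{\dim E_\ell} U_\ell(\kappa)_{mn}\bigl[(D\otimes I_{\dim E_\ell})v\bigr]_{(i-1)\dim E_\ell + n}$.

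The substance is the off-diagonal term. Here I would substitute both the Fourier expansion $W_{ij}(\kappa,\lambda) = \sum_{\ell'}\dim E_{\ell'}\sum_{m',n'}[\widehat W_{ij}^{(\ell')}]_{m'n'}\,U_{\ell'}((\kappa^{-1}\lambda)^{-1})_{m'n'}$ and the defining formula $\Phi_m(j,\lambda) = \sum_{n''}U_\ell(\lambda)_{mn''}\,v_{(j-1)\dim E_\ell + n''}$ into $\sum_j\int_K W_{ij}(\kappa,\lambda)\Phi_m(j,\lambda)\,d\lambda$. Using that $U_{\ell'}$ is a homomorphism, write $U_{\ell'}((\kappa^{-1}\lambda)^{-1}) = U_{\ell'}(\lambda^{-1})U_{\ell'}(\kappa)$ and pull the $\lambda$-independent factor $U_{\ell'}(\kappa)$ out of the integral. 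The $\lambda$-integral then reduces to a product of two matrix coefficients $\int_K U_{\ell'}(\lambda^{-1})_{\,\cdot\,\cdot}\,U_\ell(\lambda)_{m\,\cdot}\,d\lambda$, which by the Schur orthogonality relations — exactly the identity invoked in the proof of Theorem \ref{thm: spectrum of L} — vanishes unless $\ell' = \ell$, pins down the remaining indices, and produces a factor $1/\dim E_\ell$ that cancels the $\dim E_{\ell'}$ out front. What survives is $\sum_{n=1}^{\dim E_\ell} U_\ell(\kappa)_{mn}\bigl[\widehat W^{(\ell)} v\bigr]_{(i-1)\dim E_\ell + n}$, after using the Hermiticity of $\widehat W^{(\ell)}$ (Lemma \ref{lem: hat(W)ell Hermitian}) to rewrite the transposed block that naturally appears as the intended block-matrix action on $v$.

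Combining the two parts gives
\[
L\Phi_m(i,\kappa) = \sum_{n=1}^{\dim E_\ell} U_\ell(\kappa)_{mn}\bigl[(D\otimes I_{\dim E_\ell} - \widehat W^{(\ell)})v\bigr]_{(i-1)\dim E_\ell + n} = \sum_{n=1}^{\dim E_\ell} U_\ell(\kappa)_{mn}\,[S_\ell v]_{(i-1)\dim E_\ell + n},
\]
and since $S_\ell v = \lambda v$ this is precisely $\lambda\,\Phi_m(i,\kappa)$. The argument does not distinguish among the row indices $m$, so the conclusion holds for every $1\le m\le\dim E_\ell$ simultaneously.

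The step I expect to be the main obstacle is the index bookkeeping in the middle paragraph: one must track carefully whether $\widehat W^{(\ell)}$ or its transpose/conjugate appears when the integral is evaluated, and apply the Schur relation with the conjugation conventions consistent with the paper's definition $\widehat W_{ij}^{(\ell)} = \int_K W_{ij}(\mathrm{Id},\kappa)U_\ell(\kappa)\,d\kappa$ and with the form of the Fourier expansion used above. The Hermitian symmetry of $\widehat W^{(\ell)}$ is the tool that reconciles any such transposition and lets the block matrix act on $v$ in the way needed to recognize $S_\ell$.
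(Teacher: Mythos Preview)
Your proposal is correct and follows essentially the same route as the paper: a direct computation of $D\Phi_m$ and $W\Phi_m$, inserting the Fourier expansion of $W_{ij}$ and collapsing the $\lambda$-integral via the Schur orthogonality relations to recover the action of $S_\ell$ on $v$. The paper's write-up does not explicitly call on Lemma~\ref{lem: hat(W)ell Hermitian} at this point, but your remark that a transposition may need reconciling is well taken---the index bookkeeping is indeed the only delicate part, and the paper's own display contains a few typographical index slips that your more careful accounting would avoid.
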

\begin{proof}
This follows from a direct computation.
\begin{align}\label{eq: D times Phi_m}
D\Phi_m(i,\kappa) = D_{ii}   \sum_{n = 1}^{\mathrm{dim}E_\ell} U_\ell(\kappa)_{mn}v_{(i-1)\text{dim}E_\ell+ n} =    \sum_{n = 1}^{\mathrm{dim}E_\ell} U_\ell(\kappa)_{mn}  D_{ii} v_{(i-1)\text{dim}E_\ell+ n}.
\end{align}
On the other hand,
\begin{align*}
W\Phi_m(i,\kappa) &= \sum_{j=1}^N \int_K W_{ij}(\kappa, \lambda)\Phi(j,\lambda)d\lambda \\
&= \sum_{j=1}^N \int_K W_{ij}(Id, \kappa^{-1}\lambda)  \sum_{n = 1}^{\mathrm{dim}E_\ell} U_\ell(\kappa)_{mn}v_{(j-1)\text{dim}E_\ell+ n} d\lambda \\
&= \sum_{j=1}^N \int_K \sum_{\ell \in \mathcal{I}^\prime} \text{dim} E_\ell \sum_{m^\prime = 1}^{\mathrm{dim}E_\ell} \sum_{n^\prime = 1}^{\mathrm{dim}E_\ell} \left[\widehat{W}_{ij}^{(\ell)} \right]_{m^\prime n^\prime} {U_{\ell^\prime}((\kappa^{-1}\lambda)^{-1})}_{m^\prime n^\prime}  \sum_{n = 1}^{\mathrm{dim}E_\ell} U_\ell(\lambda)_{mn}v_{(j-1)\text{dim}E_\ell+ n} d\lambda \\
&= \sum_{j=1}^N \sum_{\ell \in \mathcal{I}^\prime} \sum_{m^\prime = 1}^{\mathrm{dim}E_\ell} \sum_{n^\prime = 1}^{\mathrm{dim}E_\ell}  \sum_{n = 1}^{\mathrm{dim}E_\ell} {U_{\ell^\prime}(\kappa)}_{m^\prime n^\prime} \int_K \frac{\text{dim} E_\ell}{\mathrm{Vol}}(K)  \left[\widehat{W}_{ij}^{(\ell)} \right]_{m^\prime n^\prime}   {U_{\ell^\prime}(\lambda^{-1})}_{m^\prime n^\prime} U_\ell(\lambda)_{mn}v_{(j-1)\text{dim}E_\ell+ n} d\lambda \\
&= \sum_{j=1}^N \sum_{\ell \in \mathcal{I}^\prime} \sum_{m^\prime = 1}^{\mathrm{dim}E_\ell} \sum_{n^\prime = 1}^{\mathrm{dim}E_\ell}  \sum_{n = 1}^{\mathrm{dim}E_\ell} {U_{\ell^\prime}(\kappa)}_{m^\prime n^\prime}  \left[\widehat{W}_{ij}^{(\ell)} \right]_{m^\prime n^\prime}   v_{(j-1)\text{dim}E_\ell+ n} \delta_{\ell, \ell^\prime} \delta_{m, m^\prime} \delta_{n, n^\prime}  \\
&= \sum_{j=1}^N  \sum_{n = 1}^{\mathrm{dim}E_\ell} {U_{\ell}(\kappa)}_{m n}  \left[\widehat{W}_{ij}^{(\ell)} \right]_{m n}   v_{(j-1)\text{dim}E_\ell+ n}   \\
&=   \sum_{n = 1}^{\mathrm{dim}E_\ell} {U_{\ell}(\kappa)}_{m n}  \sum_{j=1}^N \left[\widehat{W}_{ij}^{(\ell)} \right]_{m n}   v_{(j-1)\text{dim}E_\ell+ n}.   \numberthis \label{eq: W times Phi_m}
\end{align*}
Combining Equations \eqref{eq: D times Phi_m} and \eqref{eq: W times Phi_m} gives
\begin{align*}
L\Phi_m(i,\kappa) &= (D-W)\Phi_m(i,\kappa) \\
&=  \sum_{n = 1}^{\mathrm{dim}E_\ell} U_\ell(\kappa)_{mn}\left[ D_{ii} v_{(i-1)\text{dim}E_\ell+ n} - \sum_{j=1}^N \left[\widehat{W}_{ij}^{(\ell)} \right]_{m n}   v_{(j-1)\text{dim}E_\ell+ n} \right] \\
&=  \sum_{n = 1}^{\mathrm{dim}E_\ell} U_\ell(\kappa)_{mn} \lambda v_{(i-1)\text{dim}E_\ell+ n} \\
&= \lambda \sum_{n = 1}^{\mathrm{dim}E_\ell} U_\ell(\kappa)_{mn}  v_{(i-1)\text{dim}E_\ell+ n}\\
&= \lambda \Phi_m(i, \kappa),
\end{align*}
as we wanted to see.
\end{proof}

\section{Convergence of the normalized $K$-invariant graph Laplacian to the Laplace-Beltrami operator} \label{sec: convergence of graph Laplacian}
The normalized graph Laplacian $L_N \colon \mathcal{H} \to \mathcal{H} $ is defined as
\[ L_N = D^{-1} L.  \]
The operator $L_N$ is generally not hermitian, but it is similar to the symmetric Laplacian $L_{\mathrm{sym}} = D^{-1/2} L D^{-1/2}$, a hermitian matrix, and hence $L_N$ is diagonalizable. More precisely, 
\[ L_N = I - D^{-1/2}(I - L_{\mathrm{sym}})D^{1/2}. \]
It follows that $L_N$ can be diagonalized with real eigenvalues; however, its eigenvectors will not generally be orthogonal. This is stated precisely in the following theorem.
\begin{theorem}\label{thm:eigen}
The normalized graph Laplacian $L_N$ admits a sequence of real eigenvalues $\{ \mu_{\ell,1}, \dots, \mu_{\ell, N(\text{dim}E_{\ell})}\}_{\ell \in \mathcal{I}}$ and a sequence of corresponding eigenfunctions $\{ \Psi_{m}^{(\ell,1)}, \dots, \Psi_{m}^{(\ell,N\mathrm{dim}E_{\ell})}\}_{\ell \in \mathcal{I}, 1\leq m \leq \mathrm{dim}E_\ell}$ which are complete over $\mathcal{L}^2(\mathcal{H})$ and are given by
\[ \Psi_{m}^{(\ell, s)}(i, \kappa) =  \sum_{n = 1}^{\mathrm{dim}E_\ell} U_\ell(\kappa)_{mn}w^{(\ell,s)}_{(i-1)\text{dim}E_\ell+ n},  \]
where $(\mu_{\ell, s}, w^{(\ell,s)})$ is the $s$-th eigenvalue-eigenvector pair of the matrix $S_{N,\ell} =  I_N \otimes I_{\text{dim}E_{\ell}} - \left(D^{-1}\otimes  I_{\text{dim}E_{\ell}}\right)\widehat{W}^{(\ell)}$, for $1\leq s \leq N \mathrm{dim}E_\ell$. 
\end{theorem}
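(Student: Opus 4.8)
The plan is to prove Theorem \ref{thm:eigen} by adapting the proofs of Theorem \ref{thm: spectrum of L} and Lemma \ref{lem: form of eigenvalues of L}, using that $L_N = D^{-1}L$ differs from $L$ only by the invertible diagonal operator $D^{-1}$, which acts on the $\mathbb{R}^N$ factor of $\mathcal{H} = \mathcal{L}^2(K)\otimes\mathbb{R}^N$ and therefore interacts cleanly with the block structure already used for $L$. First I would redo the computation of Lemma \ref{lem: form of eigenvalues of L} with $L$ replaced by $L_N$. Its proof already records $D\Phi_m(i,\kappa) = \sum_n U_\ell(\kappa)_{mn} D_{ii}\, v_{(i-1)\mathrm{dim}E_\ell + n}$ and $W\Phi_m(i,\kappa) = \sum_n U_\ell(\kappa)_{mn}\sum_j [\widehat{W}_{ij}^{(\ell)}]_{mn}\, v_{(j-1)\mathrm{dim}E_\ell + n}$ for an arbitrary coefficient vector $v$; since $D^{-1}$ multiplies the $i$-th block by $D_{ii}^{-1}$, applying it to $L\Phi_m = (D-W)\Phi_m$ gives $L_N\Phi_m(i,\kappa) = \sum_n U_\ell(\kappa)_{mn}\bigl( v_{(i-1)\mathrm{dim}E_\ell + n} - D_{ii}^{-1}\sum_j [\widehat{W}_{ij}^{(\ell)}]_{mn}\, v_{(j-1)\mathrm{dim}E_\ell + n}\bigr)$, that is, $L_N\Phi_m$ has the same form with $v$ replaced by $S_{N,\ell}\,v$, where $S_{N,\ell} = I_N \otimes I_{\mathrm{dim}E_\ell} - (D^{-1}\otimes I_{\mathrm{dim}E_\ell})\widehat{W}^{(\ell)}$. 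Consequently every eigenpair $(\mu_{\ell,s}, w^{(\ell,s)})$ of $S_{N,\ell}$ produces eigenfunctions $\Psi_m^{(\ell,s)}$, $1\le m\le\mathrm{dim}E_\ell$, of $L_N$ with eigenvalue $\mu_{\ell,s}$, exactly as in the statement.

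Next I would verify that $S_{N,\ell}$ has real eigenvalues and a full basis of eigenvectors. Since $S_{N,\ell} = (D^{-1}\otimes I_{\mathrm{dim}E_\ell})S_\ell$ with $S_\ell = D\otimes I_{\mathrm{dim}E_\ell} - \widehat{W}^{(\ell)}$, the similarity $(D^{1/2}\otimes I_{\mathrm{dim}E_\ell})\,S_{N,\ell}\,(D^{-1/2}\otimes I_{\mathrm{dim}E_\ell}) = (D^{-1/2}\otimes I_{\mathrm{dim}E_\ell})\,S_\ell\,(D^{-1/2}\otimes I_{\mathrm{dim}E_\ell})$ turns $S_{N,\ell}$ into a Hermitian matrix — in fact a positive semidefinite one, as $S_\ell$ is Hermitian by Lemma \ref{lem: hat(W)ell Hermitian} and has nonnegative eigenvalues (being the eigenvalues of the positive semidefinite operator $L$, by Theorem \ref{thm: spectrum of L} and Lemma \ref{Claim: quadratic form of L}). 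Thus $S_{N,\ell}$ has real nonnegative eigenvalues and a basis of eigenvectors $\{w^{(\ell,s)}\}_{1\le s\le N\mathrm{dim}E_\ell}$; this is the matrix-level counterpart of the identity $L_N = I - D^{-1/2}(I-L_{\mathrm{sym}})D^{1/2}$ noted just before the theorem.

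It remains to prove completeness of $\{\Psi_m^{(\ell,s)}\}$ in $\mathcal{H}$, and this is the main obstacle: because $L_N$ is not Hermitian, its eigenfunctions need not be orthogonal, so the orthogonality-and-dimension-count argument of Theorem \ref{thm: spectrum of L} does not transfer verbatim. I would handle this by transporting an orthogonal eigenbasis through the similarity $L_N = D^{-1/2}L_{\mathrm{sym}}D^{1/2}$: applying Theorem \ref{thm: spectrum of L} to the Hermitian operator $L_{\mathrm{sym}} = D^{-1/2}LD^{-1/2}$ — equivalently, to the Hermitian matrices $(D^{-1/2}\otimes I_{\mathrm{dim}E_\ell})S_\ell(D^{-1/2}\otimes I_{\mathrm{dim}E_\ell})$ — gives a complete orthogonal system of $L_{\mathrm{sym}}$-eigenfunctions of the form $\Phi_m$ with coefficient vectors $v^{(\ell,s)}$; since $D^{-1/2}$ is bounded with bounded inverse, the functions $D^{-1/2}\Phi_m$ are again complete, are eigenfunctions of $L_N$, and equal the $\Psi_m^{(\ell,s)}$ with coefficient vectors $w^{(\ell,s)} = (D^{-1/2}\otimes I_{\mathrm{dim}E_\ell})v^{(\ell,s)}$, which the similarity of the previous paragraph identifies as eigenvectors of $S_{N,\ell}$. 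Alternatively one can argue directly: in the Peter-Weyl decomposition $\mathcal{L}^2(\Gamma) = \bigoplus_{\ell\in\mathcal{I}}\bigl(E_\ell^{\oplus\mathrm{dim}E_\ell}\otimes\mathbb{R}^N\bigr)$, the block indexed by $\ell$ has dimension $N(\mathrm{dim}E_\ell)^2$ and contains the $N(\mathrm{dim}E_\ell)^2$ functions $\Psi_m^{(\ell,s)}$, which are linearly independent because the matrix coefficients $\{U_\ell(\cdot)_{mn}\}_{m,n}$ are linearly independent by Schur orthogonality while $\{w^{(\ell,s)}\}_s$ is a basis of $\mathbb{C}^{N\mathrm{dim}E_\ell}$ by the previous paragraph, so they span the block.
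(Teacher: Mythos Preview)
Your proof is correct and follows essentially the same strategy as the paper's: the paper's own proof simply states that the argument is identical to that of Theorem~\ref{thm: spectrum of L}, working with $(D^{-1}\otimes I_{\dim E_\ell})\widehat{W}^{(\ell)}$ in place of $\widehat{W}^{(\ell)}$ and noting that this matrix is similar to a Hermitian one and hence has a complete set of (not necessarily orthogonal) eigenvectors with real eigenvalues. Your treatment of completeness is in fact more careful than the paper's brief sketch, since you explicitly explain---via the similarity $L_N = D^{-1/2}L_{\mathrm{sym}}D^{1/2}$ or the direct linear-independence count in each Peter--Weyl block---why the basis property survives even though the orthogonality argument of Theorem~\ref{thm: spectrum of L} no longer applies verbatim.
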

\begin{proof}
This proof is identical to that of Theorem \ref{thm: spectrum of L}, except that we work with the matrix $\left(D^{-1}\otimes  I_{\text{dim}E_{\ell}}\right)\widehat{W}^{(\ell)}$ instead of $\widehat{W}^{(\ell)}$. Just as in the case of $L_N$, the matrix $\left(D^{-1}\otimes  I_{\text{dim}E_{\ell}}\right)\widehat{W}^{(\ell)}$  is similar to a Hermitian matrix. Thus it has a complete set of (not-necessarily orthogonal) eigenvectors $\{w^{(\ell,s)}\}_{1\leq s \leq N \mathrm{dim}E_\ell}$ with corresponding real eigenvalues $\{\mu_{\ell, s}\}_{1\leq s \leq N \mathrm{dim}E_\ell}$.
\end{proof}

Next, we show how the normalized graph Laplacian $L_N$ approximates the Laplace-Beltrami operator $\Delta_{\mathcal{M}}$ of the manifold $\mathcal{M}$ as the number of data points $N$ goes to infinity and the parameter $\varepsilon$ (used in Equation \eqref{eq: affinity of two points} to define the affinity operator) goes to zero. 
\begin{theorem} \label{thm:rate}
Let $\{x_1, \dots, x_N \} \subseteq \mathcal{M}$ be independent and identically distributed with uniform probability distribution $p(x) = 1/ \mathrm{\mathrm{Vol}}(\mathcal{M})$. If $f \colon \mathcal{M} \to \mathbb{R}$ is a smooth function, and if we define $g \in \mathcal{H}$ such that $g(i, \kappa) = f(\kappa \cdot x_i)$, then {with high probability} we have that
\[ \frac{4}{\epsilon} (L_Ng) (i, \kappa) = \Delta_{\mathcal{M}}f(\kappa \cdot x_i) + O(\varepsilon) + O\left(\frac{1}{N^{1/2}\varepsilon^{(d - \mathrm{dim}(K))/4 + 1/2}}\right).\]
\end{theorem}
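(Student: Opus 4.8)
The plan is to reduce the $K$-invariant statement to the classical pointwise convergence analysis of diffusion maps (as in Coifman--Lafon and the variance analysis of Singer), and then to show that the integration over the group orbit is exactly the mechanism that decreases the effective dimension from $d$ to $d - \dim(K)$ in the variance term. First I would write out $\frac{4}{\varepsilon}(L_N g)(i,\kappa)$ explicitly. Since $D_{ii}$ is constant in $\kappa$ and $g(j,\lambda) = f(\lambda\cdot x_j)$, we have
\[
\frac{4}{\varepsilon}(L_N g)(i,\kappa) = \frac{4}{\varepsilon}\,\frac{\sum_{j=1}^N \int_K W_{ij}(\kappa,\lambda)\,[\,f(\kappa\cdot x_i) - f(\lambda\cdot x_j)\,]\,d\lambda}{\sum_{j=1}^N \int_K W_{ij}(\kappa,\lambda)\,d\lambda}.
\]
Fixing $y = \kappa\cdot x_i \in \mathcal{M}$, and using that $K$ acts by isometries so $W_{ij}(\kappa,\lambda) = \exp(-\|y - \lambda\kappa^{-1}\cdot(\kappa\cdot x_j)\|^2/\varepsilon)$, I would reparametrize: for each $j$, as $\lambda$ ranges over $K$ with Haar measure, $\lambda\cdot x_j$ ranges over the orbit $K\cdot x_j$. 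So both numerator and denominator are empirical averages over $j$ of quantities of the form $\int_K h(\lambda\cdot x_j)\,d\lambda$, i.e. orbit integrals.

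The key step is a two-stage limiting argument. Stage one (the $N\to\infty$ / law-of-large-numbers step): the denominator $\frac{1}{N}\sum_j \int_K \exp(-\|y - \lambda\cdot x_j\|^2/\varepsilon)\,d\lambda$ concentrates around its expectation $\int_{\mathcal{M}} \int_K \exp(-\|y - \lambda\cdot x\|^2/\varepsilon)\,d\lambda\,\frac{dx}{\mathrm{Vol}(\mathcal{M})}$, and similarly for the numerator against $f$. By Fubini and $K$-invariance of $\mathcal{M}$ and of the uniform measure, $\int_{\mathcal{M}}\int_K \phi(\lambda\cdot x)\,d\lambda\,dx = \int_{\mathcal{M}} \phi(x)\,dx$, so the expectations collapse to the \emph{ordinary} diffusion-maps integrals $\frac{1}{\mathrm{Vol}(\mathcal{M})}\int_{\mathcal{M}} \exp(-\|y-x\|^2/\varepsilon)\,dx$ and $\frac{1}{\mathrm{Vol}(\mathcal{M})}\int_{\mathcal{M}} \exp(-\|y-x\|^2/\varepsilon)(f(y)-f(x))\,dx$. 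Stage two (the $\varepsilon\to 0$ bias step): I would apply the standard Laplace-type asymptotic expansion of these Gaussian-weighted integrals over a $d$-dimensional submanifold (Lemma 8 of Coifman--Lafon, or Belkin--Niyogi), which gives $\frac{4}{\varepsilon}\cdot\frac{\text{numerator expectation}}{\text{denominator expectation}} = \Delta_{\mathcal{M}} f(y) + O(\varepsilon)$, using that $p$ is uniform so the drift term from $\nabla \log p$ vanishes. This produces the $\Delta_{\mathcal{M}}f(\kappa\cdot x_i)$ and $O(\varepsilon)$ terms.

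The heart of the paper is the improved variance rate, so I would spend the most care there. The fluctuation of the empirical orbit-integral average around its mean is $\frac{1}{N}\sum_j \big(\xi_j - \mathbb{E}\xi_j\big)$ where $\xi_j = \int_K \exp(-\|y-\lambda\cdot x_j\|^2/\varepsilon)\,d\lambda$ (and a similar $f$-weighted version). By Chebyshev / Bernstein this is $O_{\text{w.h.p.}}(N^{-1/2}\sqrt{\mathrm{Var}(\xi)})$, and after dividing by the denominator (which is $\Theta(\varepsilon^{d/2})$ up to constants) and multiplying by $4/\varepsilon$, the error scales like $\varepsilon^{-1}\varepsilon^{-d/2}N^{-1/2}\sqrt{\mathrm{Var}(\xi)}$. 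The crucial estimate is therefore $\mathrm{Var}(\xi) = O(\varepsilon^{(d + \dim K)/2})$ rather than the $O(\varepsilon^{d/2})$ one would get without orbit-averaging. The reason: $\xi_j = \int_K \exp(-\|y - \lambda\cdot x_j\|^2/\varepsilon)\,d\lambda$ is itself an integral of a Gaussian bump over a $\dim(K)$-dimensional orbit, so $\xi_j$ is already $O(\varepsilon^{\dim(K)/2})$ pointwise (generically, where the action is free, the orbit is a genuine $\dim K$-dimensional submanifold and Laplace's method applies); hence $\mathbb{E}[\xi_j^2] = O(\varepsilon^{\dim K})\cdot\mathbb{E}[\xi_j/\varepsilon^{\dim K/2}\cdot\text{bump}] = O(\varepsilon^{\dim K}\cdot\varepsilon^{d/2})$ after integrating the remaining $d$-dimensional bump — more carefully, $\mathbb{E}[\xi_j^2] = \iint_{K\times K}\int_{\mathcal{M}} e^{-(\|y-\lambda x\|^2 + \|y - \mu x\|^2)/\varepsilon}\frac{dx}{\mathrm{Vol}}\,d\lambda\,d\mu$, and the $x$-integral is a $d$-dimensional Laplace integral $O(\varepsilon^{d/2})$, while the $(\lambda,\mu)$ double integral over $K\times K$ concentrates on the diagonal-type set $\{(\lambda,\mu): \lambda x \approx \mu x \approx y\}$ contributing another $O(\varepsilon^{\dim K/2}\cdot\varepsilon^{\dim K/2}) = O(\varepsilon^{\dim K})$ — wait, one must be careful that $\mathbb{E}[\xi_j]^2 = O(\varepsilon^{\dim K + d})$ already, so the variance is dominated by $\mathbb{E}[\xi_j^2]$; the careful Laplace-method bookkeeping on $\mathcal{M}'$ (where the action is free) gives $\mathrm{Var}(\xi) = O(\varepsilon^{(d+\dim K)/2})$. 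Plugging in: $\varepsilon^{-1}\varepsilon^{-d/2}N^{-1/2}\varepsilon^{(d+\dim K)/4} = N^{-1/2}\varepsilon^{-1 - d/4 + \dim(K)/4} = N^{-1/2}\varepsilon^{-(d-\dim K)/4 - 1}$, which is exactly the claimed $O\!\big(N^{-1/2}\varepsilon^{-(d-\dim(K))/4 - 1/2}\big)$ — so I should double-check the exponent $1/2$ vs $1$ in the statement, tracing whether the $4/\varepsilon$ or a $2/\sqrt\varepsilon$-type normalization is intended; I suspect the intended scaling absorbs one factor of $\varepsilon^{1/2}$ into the definition and the final exponent is $-(d-\dim K)/4 - 1/2$ as written. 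The main obstacle, and where I would be most careful, is making the Laplace-method asymptotics uniform enough near the measure-zero bad set $\mathcal{M}\setminus\mathcal{M}'$ where stabilizers jump and orbit dimension drops: I would argue that this set contributes negligibly by a covering/volume estimate, or restrict integration to $\mathcal{M}'$ as the setting section permits, and invoke compactness of $K$ and $\mathcal{M}$ to get constants uniform in $(i,\kappa)$.
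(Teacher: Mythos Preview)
Your bias-term analysis (Stage one / Stage two) matches the paper's Step~1 essentially verbatim: collapse the orbit integral via Fubini and $K$-invariance of the uniform measure, then invoke the classical Coifman--Lafon / Singer expansion to get $\Delta_{\mathcal{M}}f + O(\varepsilon)$. Your second-moment estimate $\mathbb{E}[\xi_j^2] = O(\varepsilon^{(d+\dim K)/2})$ is also the correct target, and the paper obtains it (Lemma~4.3) by a route slightly different from your double-$K$-integral sketch: it first derives the pointwise asymptotic $F_{i,\kappa}(x) \sim e^{-\|\kappa x_i - x\|^2/\varepsilon}\,(\pi\varepsilon)^{\dim K/2}\mu(x)^{-1}[f(x)+O(\varepsilon)]$ via Laplace on the orbit, squares, and then integrates over the $(d-\dim K)$-dimensional quotient $\mathcal{M}'/K$.

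The genuine gap is in the variance term, and it is exactly the exponent discrepancy you flagged but did not resolve. Bounding the numerator and denominator fluctuations \emph{separately} and dividing gives, as you computed, $O\big(N^{-1/2}\varepsilon^{-(d-\dim K)/4 - 1}\big)$, which is worse by a factor $\varepsilon^{-1/2}$ than the claimed rate. The missing $\sqrt{\varepsilon}$ does \emph{not} come from the normalization ($4/\varepsilon$ is correct); it comes from a cancellation in the variance of the \emph{ratio} that your separate-bounding approach throws away. Following Singer~(2006), the paper analyzes
\[
Y_j \;=\; \mathbb{E}[G_{i,\kappa}]\,F_{i,\kappa}(x_j) \;-\; \mathbb{E}[F_{i,\kappa}]\,G_{i,\kappa}(x_j) \;+\; (\text{lower order}),
\]
whose second moment is
\[
\mathbb{E}[Y_j^2] \;=\; (\mathbb{E} G)^2\mathbb{E}[F^2] \;-\; 2\,\mathbb{E} F\,\mathbb{E} G\,\mathbb{E}[FG] \;+\; (\mathbb{E} F)^2\mathbb{E}[G^2] \;+\; O(\alpha).
\]
By Lemma~4.3, $\mathbb{E}[F^2]$, $\mathbb{E}[FG]$, $\mathbb{E}[G^2]$ all share the \emph{same} leading coefficient $\nu(\kappa x_i)/\mu^2(\kappa x_i)$ up to factors $f^2$, $f$, $1$, and likewise $\mathbb{E} F,\mathbb{E} G$ have leading coefficients $f,1$. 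The three terms therefore combine at leading order as $1\cdot f^2 - 2\cdot f\cdot f + f^2\cdot 1 = 0$, and only the $O(\varepsilon)$ corrections survive: $\mathbb{E}[Y_j^2] = O(\varepsilon^{(3d+\dim K + 2)/2})$ rather than $O(\varepsilon^{(3d+\dim K)/2})$. Feeding this into Chernoff, dividing by $(\mathbb{E} G)^2 = O(\varepsilon^{d})$, and multiplying by $4/\varepsilon$ recovers the stated exponent $-(d-\dim K)/4 - 1/2$. So your plan is structurally sound, but to hit the claimed rate you must couple the numerator and denominator fluctuations through $Y_j$ (equivalently, linearize the ratio) and exploit this $1-2+1$ cancellation; simply invoking $\mathrm{Var}(\xi)$ and Chebyshev will not suffice.
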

\begin{proof}  We divide the proof of the theorem into two steps. The first step consists of taking the limit as $N \to \infty$, which yields the $O(\varepsilon)$ term, called the bias term. The second step is concerned with computing the remaining error term, also known as the variance term. \\

\noindent \underline{Step 1:} We have that
\begin{align*}
   \frac{4}{\varepsilon}\left(L_N g\right)(i, \kappa) 
    &= \frac{4}{\varepsilon}\left[ g(i, \kappa) - D_{ii}^{-1} \sum_{j=1}^N \int_K W_{ij}(\kappa, \lambda) g(j,\lambda) d\lambda \right]\\
    &= \frac{4}{\varepsilon}\left[ f(\kappa\cdot x_i) - \frac{ \frac{1}{N}\sum_{j=1}^N \int_K W_{ij}(\kappa, \lambda) f(\lambda\cdot x_j) d\lambda}{\frac{1}{N}\sum_{j=1}^N \int_K W_{ij}(\kappa, \lambda)d\lambda} \right]\\
    &= \frac{4}{\varepsilon}\left[ f(\kappa\cdot x_i) - \frac{ \frac{1}{N}\sum_{j=1}^N F_{i,\kappa}(x_j) }{\frac{1}{N}\sum_{j=1}^N G_{i,\kappa}(x_j) } \right],  \numberthis \label{eq: F/G}
\end{align*}
where we have defined
\[ F_{i,\kappa}(x) = \int_K \exp\left(-|| \kappa \cdot x_i - \lambda\cdot x ||^2/\varepsilon\right)f(\lambda\cdot x) d\lambda, \] 
and  similarly
\[ G_{i, \kappa}(x) = \int_K \exp\left(-|| \kappa \cdot x_i - \lambda\cdot x ||^2/\varepsilon\right) d\lambda. \]
Now let us take the limit as $N \to \infty$ of the numerator of the second term in Equation \eqref{eq: F/G}. By the law of large numbers, we get
\begin{align*}
    \lim_{N \to \infty} \frac{1}{N}\sum_{j=1}^N F_{i,\kappa}(x_j) &= \mathbb{E}\left[ F_{i,\kappa}\right] \\
    &= \frac{1}{\mathrm{Vol}(\mathcal{M})} \int_{\mathcal{M}} F_{i,\kappa}(x)dx \\
    &= \frac{1}{\mathrm{Vol}(\mathcal{M})} \int_{\mathcal{M}} \int_K \exp\left(-|| \kappa \cdot x_i - \lambda\cdot x ||^2/\varepsilon\right)f(\lambda\cdot x)  d\lambda dx \\
     &= \frac{1}{\mathrm{Vol}(\mathcal{M})}\int_{\mathcal{M}} \int_K \exp\left(-|| \kappa \cdot x_i - x^\prime ||^2/\varepsilon\right)f(x^\prime)  d\lambda dx^\prime \\
     &= \frac{1}{\mathrm{Vol}(\mathcal{M})}\int_{\mathcal{M}}  \exp\left(-|| \kappa \cdot x_i - x ||^2/\varepsilon\right)f(x)  dx. \numberthis \label{eq: limit F as N goes to infinity}
\end{align*}
Here we made the change of variables $x^\prime = \lambda \cdot x$. Note that $dx^\prime = dx$ since the Jacobian determinant of the change of coordinates induced by the map $\varphi_\lambda \colon \mathcal{M} \to \mathcal{M}$ with $\varphi_\lambda(x) = \lambda \cdot x$ is equal to $1$, as $K$ acts by isometries on $\mathbb{R}^{\mathcal{D}}$, and thus on $\mathcal{M}$. 

The calculation for $G_{i,\kappa}$ is the same as above, taking $f = 1$. Hence, we have
\begin{align*}
    \lim_{N \to \infty} \frac{1}{N}\sum_{j=1}^N G_{i,\kappa}(x_j) = \mathbb{E}\left[ G_{i,\kappa}\right] = \frac{1}{\mathrm{Vol}(\mathcal{M})}\int_{\mathcal{M}}  \exp\left(-|| \kappa \cdot x_i - x ||^2/\varepsilon\right)  dx. \numberthis \label{eq: limit G as N goes to infinity}
\end{align*}
Putting Equations \eqref{eq: limit F as N goes to infinity} and \eqref{eq: limit G as N goes to infinity} together gives
\begin{align*}
    \lim_{N \to \infty}  \frac{4}{\varepsilon}\left(L_N g\right)(i, \kappa) 
    &= \frac{4}{\varepsilon}\left[ f(\kappa\cdot x_i) - \frac{ \lim_{N \to \infty} \frac{1}{N}\sum_{j=1}^N F_{i,\kappa}(x_j) }{\lim_{N \to \infty} \frac{1}{N}\sum_{j=1}^N G_{i,\kappa}(x_j) } \right] \\
    &= \frac{4}{\varepsilon}\left[ f(\kappa\cdot x_i) - \frac{\frac{1}{\mathrm{Vol}(\mathcal{M})}\int_{\mathcal{M}}  \exp\left(-|| \kappa \cdot x_i - x ||^2/\varepsilon\right)f(x)  dx}{\frac{1}{\mathrm{Vol}(\mathcal{M})}\int_{\mathcal{M}}  \exp\left(-|| \kappa \cdot x_i - x ||^2/\varepsilon\right)  dx}\right]\\
    &= \Delta_{\mathcal{M}}f(\kappa \cdot x_i) + O(\varepsilon),
\end{align*}
where the last simplification is justified in \cite{Singer2006}. This concludes the proof of the first step.\\

\noindent \underline{Step 2:} Now we evaluate the variance term. To this end, we use Chernoff's inequality to estimate the probabilities
\begin{align}\label{eq: p_+N, alpha}
    p_+ (N, \alpha) &= \mathbb{P}\left[  \frac{ \frac{1}{N}\sum_{\substack{j=1 \\ j \neq i }}^N F_{i,\kappa}(x_j) }{\frac{1}{N}\sum_{\substack{j=1 \\ j \neq i }}^N G_{i,\kappa}(x_j) } - \frac{\mathbb{E}[F_{i,\kappa}]}{\mathbb{E}[G_{i,\kappa}]} > \alpha \right],
\end{align}
and 
\begin{align}\label{eq: p_-N, alpha}
    p_{-} (N, \alpha) &= \mathbb{P}\left[  \frac{ \frac{1}{N}\sum_{\substack{j=1 \\ j \neq i }}^N F_{i,\kappa}(x_j) }{\frac{1}{N}\sum_{\substack{j=1 \\ j \neq i }}^N G_{i,\kappa}(x_j) } - \frac{\mathbb{E}[F_{i,\kappa}]}{\mathbb{E}[G_{i,\kappa}]} < -\alpha \right].
\end{align}
Let us start with $p_+(N, \alpha)$. We can rewrite Equation \eqref{eq: p_+N, alpha} as
\begin{align}
    p_+ (N, \alpha) =  \mathbb{P}\left[ \sum_{j \neq i}^N Y_j > (N-1)\alpha(\mathbb{E}[G_{i,\kappa}])^2 \right],
\end{align}
where we have defined
\begin{align} \label{eq: Y_j}
     Y_j = \mathbb{E}[G_{i,\kappa}]F_{i,\kappa}(x_j) - \mathbb{E}[F_{i,\kappa}]G_{i,\kappa}(x_j) + \alpha \mathbb{E}[G_{i,\kappa}](\mathbb{E}[G_{i,\kappa}] - G_{i,\kappa}(x_j)).
\end{align}
The first and second moments of $Y_j$ are
$ \mathbb{E}[Y_j] = 0$
and
\begin{align}\label{eq: expectation of Yj}
    \mathbb{E}[Y_j^2] = (\mathbb{E}[G_{i,\kappa}])^2\mathbb{E}[F_{i,\kappa}^2(x_j)] - 2\mathbb{E}[F_{i,\kappa}(x_j)]\mathbb{E}[G_{i,\kappa}(x_j)]\mathbb{E}[F_{i,\kappa}(x_j)G_{i,\kappa}(x_j)] +  (\mathbb{E}[F_{i,\kappa}(x_j)])^2\mathbb{E}[G^2_{i,\kappa}(x_j)] + O(\alpha).
\end{align}
We are interested in finding an approximation to $\mathbb{E}[Y_j^2]$ in terms of $\varepsilon$ besides the current $O(\alpha)$ term. To do this, we need to approximate the different first and second moments involving $F_{i, \kappa}$ and $G_{i, \kappa}$. From \cite{Singer2006}, the first moments are given by
\begin{align}
    &\mathbb{E}[F_{i,\kappa}] = \frac{1}{\mathrm{Vol}(\mathcal{M})} \int_\mathcal{M}  \exp\left(-|| \kappa \cdot x_i - x ||^2/\varepsilon\right)f(x)  dx = \frac{1}{\mathrm{Vol}(\mathcal{M})} (\pi\varepsilon)^{d/2}[f(\kappa \cdot x_i) + O(\varepsilon) ], \label{eq: first moment F}  \\
   &\mathbb{E}[G_{i,\kappa}] = \frac{1}{\mathrm{Vol}(\mathcal{M})} \int_\mathcal{M}  \exp\left(-|| \kappa \cdot x_i - x ||^2/\varepsilon\right)  dx = \frac{1}{\mathrm{Vol}(\mathcal{M})} (\pi\varepsilon)^{d/2}[1 + O(\varepsilon) ]. \label{eq: first moment G} 
\end{align}
And from Lemma \ref{lemma: second moments of F and G} below, the second moments are
\begin{align}
    \mathbb{E}\left[F^2_{i,\kappa}\right] &= \frac{1}{\mathrm{Vol}(\mathcal{M})} \frac{(\pi \varepsilon)^{(d + \mathrm{dim}(K))/2}}{2^{(d - \mathrm{dim}(K))/2}}  \left[ \frac{ \nu(\kappa \cdot x_i)f^2(\kappa \cdot x_i)}{\mu^2(\kappa \cdot x_i)} + O(\varepsilon) \right], \\
    \mathbb{E}\left[G^2_{i,\kappa}\right] &= \frac{1}{\mathrm{Vol}(\mathcal{M})} \frac{(\pi \varepsilon)^{(d + \mathrm{dim}(K))/2}}{2^{(d - \mathrm{dim}(K))/2}} \left[ \frac{ \nu(\kappa \cdot x_i)}{\mu^2(\kappa \cdot x_i)} + O(\varepsilon) \right],\\    \mathbb{E}\left[F_{i,\kappa}G_{i,\kappa}\right] &= \frac{1}{\mathrm{Vol}(\mathcal{M})} \frac{(\pi \varepsilon)^{(d + \mathrm{dim}(K))/2}}{2^{(d - \mathrm{dim}(K))/2}}  \left[ \frac{ \nu(\kappa \cdot x_i)f(\kappa \cdot x_i)}{\mu^2(\kappa \cdot x_i)} + O(\varepsilon) \right]. \label{eq: second moment FG}
\end{align}
Substituting Equations \eqref{eq: first moment F} through \eqref{eq: second moment FG} into Equation \eqref{eq: expectation of Yj} gives
\begin{align*}
    \mathbb{E}[Y_j^2] = &\frac{1}{\mathrm{Vol}(\mathcal{M})^3} (\pi \varepsilon)^d \frac{(\pi \varepsilon)^{(d + \mathrm{dim}(K))/2}}{2^{(d - \mathrm{dim}(K))/2}}  \left[ \frac{\nu(\kappa \cdot x_i)f(\kappa \cdot x_i)}{\mu^2(\kappa \cdot x_i)} + O(\varepsilon) \right] \\
    &-2 \frac{1}{\mathrm{Vol}(\mathcal{M})^3} (\pi \varepsilon)^d \frac{(\pi \varepsilon)^{(d + \mathrm{dim}(K))/2}}{2^{(d - \mathrm{dim}(K))/2}}  \left[ \frac{\nu(\kappa \cdot x_i)f(\kappa \cdot x_i)}{\mu^2(\kappa \cdot x_i)} + O(\varepsilon) \right] \\
    &+  \frac{1}{\mathrm{Vol}(\mathcal{M})^3} (\pi \varepsilon)^d \frac{(\pi \varepsilon)^{(d + \mathrm{dim}(K))/2}}{2^{(d - \mathrm{dim}(K))/2}}  \left[ \frac{\nu(\kappa \cdot x_i)f(\kappa \cdot x_i)}{\mu^2(\kappa \cdot x_i)} + O(\varepsilon) \right] + O(\alpha) \\
    = &\frac{1}{\mathrm{Vol}(\mathcal{M})^3} (\pi \varepsilon)^d \frac{(\pi \varepsilon)^{(d + \mathrm{dim}(K))/2}}{2^{(d - \mathrm{dim}(K))/2}} O(\varepsilon) + O(\alpha) \\
    = & ~ O(\varepsilon^{(3d+\mathrm{dim}(K)+2)/2}) + O(\alpha). \numberthis \label{eq: approx of second moment of Yj}
\end{align*}
Next, we apply Chernoff's inequality to obtain an exponential bound on $p_+(N,\alpha)$ involving $\mathbb{E}[G_{i,\kappa}]$ and $\mathbb{E}[Y_j^2]$. This together with Equations \eqref{eq: first moment G} and \eqref{eq: approx of second moment of Yj} gives
\begin{align*}
    p_+ (N, \alpha) &\leq \exp\left( -\frac{\alpha^2(N-1)^2(\mathbb{E}[G_{i,\kappa}])^4}{2(N-1)\mathbb{E}[Y_j^2] + O(\alpha)} \right)\\
    &\leq  \exp\left( -\frac{\alpha^2 O(\varepsilon^{2d})}{O(\varepsilon^{(3d+\mathrm{dim}(K)+2)/2}/N) + O(\alpha)} \right) \\
    &\leq  \exp\left( -\frac{\alpha^2 }{O(\varepsilon^{(-d+\mathrm{dim}(K)+2)/2}/N) + O(\alpha)} \right). \numberthis \label{eq: Chernoff upper}
\end{align*}
To obtain a bound for $p_-(N, \alpha)$ we apply the same analysis, changing $\alpha$ by $-\alpha$ in the definition of $Y_j$ when we rewrite Equation \eqref{eq: p_-N, alpha} as we did for $p_+(N,\alpha)$. This and Chernoff's inequality for lower tails gives
\begin{align}
    p_- (N, \alpha)  
    &=  \mathbb{P}\left[ \sum_{j \neq i}^N Y_j <  -(N-1)\alpha(\mathbb{E}[G_{i,\kappa}])^2 \right] \leq  \exp\left( -\frac{\alpha^2 }{O(\varepsilon^{(-d+\mathrm{dim}(K)+2)/2}/N) + O(\alpha)} \right). \label{eq: Chernoff lower}
\end{align}
Inequalities \eqref{eq: Chernoff upper} and \eqref{eq: Chernoff lower} imply that by taking $\alpha = O(\varepsilon^{(-d+\mathrm{dim}(K)+2)/4}/\sqrt{N})$ we can make both $p_+(N, \alpha)$ and $p_-(N,\alpha)$ arbitrarily small with exponential decay. It follows that
\begin{align*}
    \left\vert \frac{ \frac{1}{N}\sum_{\substack{j=1 \\ j \neq i }}^N F_{i,\kappa}(x_j) }{\frac{1}{N}\sum_{\substack{j=1 \\ j \neq i }}^N G_{i,\kappa}(x_j) } - \frac{\mathbb{E}[F_{i,\kappa}]}{\mathbb{E}[G_{i,\kappa}]} \right\vert = \vert \alpha \vert 
    = O\left(\frac{1}{N^{1/2}\varepsilon^{(d - \mathrm{dim}(K))/4 - 1/2}}\right).
\end{align*}
To obtain the variance term we multiply the error above by $4/\varepsilon$. More precisely, we have
\begin{align*}
  \Bigg\vert \frac{4}{\varepsilon}\left[ f(\kappa\cdot x_i) - \frac{ \frac{1}{N}\sum_{\substack{j=1 \\ j \neq i }}^N F_{i,\kappa}(x_j) }{\frac{1}{N}\sum_{\substack{j=1 \\ j \neq i }}^N G_{i,\kappa}(x_j) } \right] &- \Big[\Delta_\mathcal{M}f(\kappa \cdot x_i) + O(\varepsilon) \Big]\Bigg\vert \\
  &= \left\vert \frac{4}{\varepsilon}\left[ f(\kappa\cdot x_i) - \frac{ \frac{1}{N}\sum_{\substack{j=1 \\ j \neq i }}^N F_{i,\kappa}(x_j) }{\frac{1}{N}\sum_{\substack{j=1 \\ j \neq i }}^N G_{i,\kappa}(x_j) } \right] - \frac{4}{\varepsilon}\left[f(\kappa\cdot x_i) - \frac{\mathbb{E}[F_{i,\kappa}]}{\mathbb{E}[G_{i,\kappa}]}\right]\right\vert \\
  &= \frac{4}{\varepsilon}\left\vert \frac{ \frac{1}{N}\sum_{\substack{j=1 \\ j \neq i }}^N F_{i,\kappa}(x_j) }{\frac{1}{N}\sum_{\substack{j=1 \\ j \neq i }}^N G_{i,\kappa}(x_j) } - \frac{\mathbb{E}[F_{i,\kappa}]}{\mathbb{E}[G_{i,\kappa}]} \right\vert  \\
  &= O\left(\frac{1}{N^{1/2}\varepsilon^{(d - \mathrm{dim}(K))/4 + 1/2}}\right).
\end{align*}
Finally, we prove that removing the diagonal terms (the case $j = i$) in the sums does not affect the convergence rate. Note that
\begin{align*}
    \frac{\sum_{j=1}^N F_{i,\kappa}(x_j)}{\sum_{j=1}^N G_{i,\kappa}(x_j)} - \frac{ \sum_{\substack{j=1 \\ j \neq i }}^N F_{i,\kappa}(x_j) }{\sum_{\substack{j=1 \\ j \neq i }}^N G_{i,\kappa}(x_j) } 
    &=  \frac{F_{i,\kappa}(x_i)}{\sum_{j=1}^N G_{i,\kappa}(x_j)} - \frac{G_{i,\kappa}(x_i)}{\sum_{j=1}^N G_{i,\kappa}(x_j)}\frac{ \sum_{\substack{j=1 \\ j \neq i }}^N F_{i,\kappa}(x_j) }{\sum_{\substack{j=1 \\ j \neq i }}^N G_{i,\kappa}(x_j) } \\
    &= O\left(\frac{\frac{1}{N}G_{i,\kappa}(x_i)}{\frac{1}{N}\sum_{j=1}^N G_{i,\kappa}(x_j)}\right) \\
    &= O\left(\frac{\varepsilon^{(\mathrm{dim}(K)/2)}/N}{\varepsilon^{d/2}}\right)\\
    &= O\left(\frac{1}{N\varepsilon^{(d-\mathrm{dim}(K))/2}}\right),
\end{align*}
where we used the fact that $f$ is a smooth function on $\mathcal{M}$ and thus uniformly bounded: $|f(x)| \leq c$ for some constant $c$, implying that $|F_{i,\kappa}(x_i) |\leq c |G_{i,\kappa}(x_i)|$. The error term we get is negligible compared to the variance term, so we ignore it. Therefore, we obtain
\begin{align*}
    \frac{4}{\varepsilon}(L_Ng)(i,\kappa) =  \frac{4}{\varepsilon}\left[ f(\kappa\cdot x_i) - \frac{ \frac{1}{N}\sum_{\substack{j=1 \\ j \neq i }}^N F_{i,\kappa}(x_j) }{\frac{1}{N}\sum_{\substack{j=1 \\ j \neq i }}^N G_{i,\kappa}(x_j) } \right] = \Delta_\mathcal{M}f(\kappa \cdot x_i) + O(\varepsilon)  +
    O\left(\frac{1}{N^{1/2}\varepsilon^{(d - \mathrm{dim}(K))/4 + 1/2}}\right),
\end{align*}
as we wanted to see.
\end{proof}

 We conclude this section by proving Lemma \ref{lemma: second moments of F and G}.
\begin{lemma} \label{lemma: second moments of F and G}
There exist  smooth function $\mu$ and $\nu$ on $\mathcal{M}$ such that 
\begin{align*}
    \mathbb{E}\left[F^2_{i,\kappa}\right] &= \frac{1}{\mathrm{Vol}(\mathcal{M})} \frac{(\pi \varepsilon)^{(d + \mathrm{dim}(K))/2}}{2^{(d - \mathrm{dim}(K))/2}}  \left[ \frac{\nu(\kappa\cdot x_i) f^2(\kappa \cdot x_i)}{\mu^2(\kappa \cdot x_i)} + O(\varepsilon) \right],\\
    \mathbb{E}\left[G^2_{i,\kappa}\right] &= \frac{1}{\mathrm{Vol}(\mathcal{M})} \frac{(\pi \varepsilon)^{(d + \mathrm{dim}(K))/2}}{2^{(d - \mathrm{dim}(K))/2}} \left[ \frac{ \nu(\kappa\cdot x_i)}{\mu^2(\kappa \cdot x_i)} + O(\varepsilon) \right],\\    \mathbb{E}\left[F_{i,\kappa}G_{i,\kappa}\right] &= \frac{1}{\mathrm{Vol}(\mathcal{M})} \frac{(\pi \varepsilon)^{(d + \mathrm{dim}(K))/2}}{2^{(d - \mathrm{dim}(K))/2}}  \left[ \frac{ \nu(\kappa\cdot x_i)f(\kappa \cdot x_i)}{\mu^2(\kappa \cdot x_i)} + O(\varepsilon) \right].
\end{align*}
\end{lemma}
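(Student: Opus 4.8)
The plan is to reduce each of the three second moments to a \emph{single} integral over $\mathcal{M}\times K$ and then evaluate that integral by Laplace's method, localized near a point where $K$ acts freely. Writing $y=\kappa\cdot x_i$ and expanding the square,
\[
\mathbb{E}\!\left[F^2_{i,\kappa}\right]=\frac{1}{\mathrm{Vol}(\mathcal{M})}\int_{\mathcal{M}}\int_K\int_K e^{-\left(\|y-\lambda\cdot x\|^2+\|y-\mu\cdot x\|^2\right)/\varepsilon}\,f(\lambda\cdot x)\,f(\mu\cdot x)\,d\lambda\,d\mu\,dx .
\]
Substituting $x\mapsto\lambda^{-1}\cdot x$ (Jacobian $1$, since $K$ acts by isometries on $\mathcal{M}$) and then $\rho=\mu\lambda^{-1}$ (Haar measure on the compact group $K$ is bi-invariant), the $\lambda$-integration becomes trivial and one obtains
\[
\mathbb{E}\!\left[F^2_{i,\kappa}\right]=\frac{1}{\mathrm{Vol}(\mathcal{M})}\int_K\int_{\mathcal{M}} e^{-\left(\|y-x\|^2+\|y-\rho\cdot x\|^2\right)/\varepsilon}\,f(x)\,f(\rho\cdot x)\,dx\,d\rho .
\]
The identical manipulation, with $f$ replaced by $1$ in one or both slots, gives the analogous formulas for $\mathbb{E}[G^2_{i,\kappa}]$ and $\mathbb{E}[F_{i,\kappa}G_{i,\kappa}]$; since the three integrands differ only in the smooth prefactor ($f(x)f(\rho\cdot x)$, $1$, or $f(x)$), one and the same pair $\mu,\nu$ will serve in all three asymptotics.

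Next I would localize and choose coordinates. The integrand is exponentially small unless $\|y-x\|^2$ and $\|y-\rho\cdot x\|^2$ are both $O(\varepsilon)$; since $x_i$, hence $y$, lies in the free locus $\mathcal{M}^\prime$ almost surely and $\mathcal{M}^\prime$ is open, the stabilizer of $y$ is trivial, so the only asymptotically relevant contribution comes from an $O(\sqrt\varepsilon)$-neighborhood of $(x,\rho)=(y,\mathrm{Id})$. There I would use slice coordinates on $\mathcal{M}$ adapted to the orbit $\mathcal{O}=K\cdot y$: write $x=x(s)$ with $s=(s^\prime,s^{\prime\prime})\in\mathbb{R}^{\mathrm{dim}(K)}\times\mathbb{R}^{d-\mathrm{dim}(K)}$ corresponding to $T_y\mathcal{M}=T_y\mathcal{O}\oplus\mathcal{N}_y$, so that $\|y-x(s)\|^2=|s^\prime|^2+|s^{\prime\prime}|^2+O(|s|^3)$ and $dx=(1+O(|s|^2))\,ds$; and exponential coordinates $\rho=\exp(\xi)$, $\xi\in\mathfrak{k}$, on $K$ near $\mathrm{Id}$, so that $d\rho=c_K(1+O(|\xi|^2))\,d\xi$ for a constant $c_K>0$. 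Since $\rho$ is an isometry, $\|y-\rho\cdot x\|^2=\|\rho^{-1}\cdot y-x\|^2$, and in these coordinates $\rho^{-1}\cdot y=y-V_\xi(y)+O(|\xi|^2)$, where $\xi\mapsto V_\xi(y)$ is the infinitesimal action, a linear map $\mathfrak{k}\to T_y\mathcal{O}$ that is an \emph{isomorphism} precisely because the action is free at $y$; represent it by a matrix $A=A(y)$, invertible and smooth in $y\in\mathcal{M}^\prime$. Then $\|y-\rho\cdot x(s)\|^2=|s^\prime+A\xi|^2+|s^{\prime\prime}|^2+O\big((|s|+|\xi|)^3\big)$, so the leading part of the exponent is $\tfrac{1}{\varepsilon}\big(|s^\prime|^2+|s^\prime+A\xi|^2+2|s^{\prime\prime}|^2\big)$.

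Finally I would evaluate the Gaussian integral and control the remainder. Rescaling $s=\sqrt\varepsilon\,\widetilde s$, $\xi=\sqrt\varepsilon\,\widetilde\xi$ turns the cubic and higher remainders in the exponent into $O(\sqrt\varepsilon)$; expanding the exponential, the prefactor $f(x)f(\rho\cdot x)=f(y)^2+O(\sqrt\varepsilon)$, and the two Jacobians in powers of $\sqrt\varepsilon$, all odd-order terms integrate to zero against the even Gaussian $e^{-(|s^\prime|^2+|s^\prime+A\xi|^2+2|s^{\prime\prime}|^2)}$, while the even ones contribute an overall \emph{relative} error $O(\varepsilon)$, exactly as in the Laplace-method estimates of \cite{Singer2006}. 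The leading Gaussian integral is computed by the substitution $\eta=A\xi$ (yielding the factor $|\det A(y)|^{-1}$) followed by $u=s^\prime$, $v=s^\prime+\eta$, giving
\[
\int e^{-\left(|s^\prime|^2+|s^\prime+A\xi|^2+2|s^{\prime\prime}|^2\right)/\varepsilon}\,ds^\prime\,ds^{\prime\prime}\,d\xi=\frac{(\pi\varepsilon)^{(d+\mathrm{dim}(K))/2}}{2^{(d-\mathrm{dim}(K))/2}}\,|\det A(y)|^{-1}.
\]
Multiplying by $c_K/\mathrm{Vol}(\mathcal{M})$ and by the leading value of the prefactor ($f(y)^2$, $1$, or $f(y)$) yields the three claimed expansions, with, e.g., $\mu\equiv1$ and $\nu(y)=c_K|\det A(y)|^{-1}$ — or any smooth choice with $\nu(y)/\mu(y)^2=c_K|\det A(y)|^{-1}$; these are smooth on $\mathcal{M}^\prime$, hence almost everywhere on $\mathcal{M}$, since $A(\cdot)$ is smooth and invertible there. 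I expect the main obstacle to be the bookkeeping in the last two steps: carrying the Laplace expansion on $\mathcal{M}\times K$ to the correct order and confirming the $O(\varepsilon)$ remainder — in particular recognizing that the $\mathrm{dim}(K)$ orbit directions enter the exponent \emph{twice} (as $|s^\prime|^2$ from $\|y-x\|^2$ and as $|s^\prime+A\xi|^2$ from $\|y-\rho\cdot x\|^2$) while the transverse directions enter as $2|s^{\prime\prime}|^2$; this asymmetry, after the $(u,v)$-substitution, is exactly what produces the characteristic factor $(\pi\varepsilon)^{(d+\mathrm{dim}(K))/2}/2^{(d-\mathrm{dim}(K))/2}$, and generic freeness is used throughout (openness of $\mathcal{M}^\prime$ to localize at $\rho=\mathrm{Id}$, and invertibility of $A$).
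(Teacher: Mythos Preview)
Your argument is correct and takes a genuinely different route from the paper. The paper proceeds in two stages: it first derives a pointwise asymptotic for $F_{i,\kappa}(x)$ by converting the $K$-integral into an integral over the orbit $\mathcal{O}(x)$ (introducing $\mu(x)$ as the Jacobian of the orbit map $K\to\mathcal{O}(x)$) and applying a Laplace-type result on that $\dim(K)$-dimensional manifold; it then squares, pushes the $\mathcal{M}$-integral down to the $(d-\dim K)$-dimensional quotient $\mathcal{M}'/K$ (introducing $\nu$ as the induced volume density), and applies the same Laplace result again. A nontrivial cross-term $q(x)$ appears in the first stage and must be tracked until one uses $q(\kappa\cdot x_i)=0$ at the end. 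Your approach instead collapses one of the two $K$-integrals by the substitution $x\mapsto\lambda^{-1}x$, $\rho=\mu\lambda^{-1}$, and then does a \emph{single} Laplace expansion on $\mathcal{M}\times K$ in slice-plus-exponential coordinates; the characteristic factor $(\pi\varepsilon)^{(d+\dim K)/2}/2^{(d-\dim K)/2}$ drops out of one Gaussian integral after the change of variables $(u,v)=(s',s'+A\xi)$. This is more direct and bypasses both the intermediate asymptotic for $F_{i,\kappa}(x)$ and the $q(x)$ bookkeeping; the paper's approach, on the other hand, is more modular (it invokes an existing proposition twice) and assigns $\mu$ and $\nu$ transparent geometric meanings rather than the somewhat arbitrary choice $\mu\equiv 1$, $\nu=c_K|\det A|^{-1}$. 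Both sets of functions are smooth only on the free locus $\mathcal{M}'$, which matches the paper's own level of precision.
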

\begin{proof}
We will prove the lemma for $\mathbb{E}\left[F^2_{i,\kappa}\right]$. The same proof holds for $\mathbb{E}\left[G^2_{i,\kappa}\right]$ and $\mathbb{E}\left[F_{i,\kappa}G_{i,\kappa}\right]$ by replacing $f$ by $1$ where necessary. We start by deriving an asymptotic expansion for 
\begin{equation} \label{eq: def of F}
    F_{i,\kappa}(x) =  \int_K \exp\left(-|| \kappa \cdot x_i - \lambda\cdot x ||^2/\varepsilon\right)f(\lambda\cdot x) d\lambda.
\end{equation}
To this end, we write
\begin{align*}
    || \kappa \cdot x_i - \lambda\cdot x ||^2 &= || (\kappa \cdot x_i -x) + (x - \lambda\cdot x) ||^2 \\
    &= || \kappa \cdot x_i -z ||^2 + 2\mathrm{Re}\langle \kappa \cdot x_i -x, x - \lambda\cdot x \rangle + || x - \lambda\cdot x||^2 \numberthis \label{eq: norm squared}
\end{align*}
and apply the Taylor expansion
\begin{align}\label{eq: Taylor expansion}
    \exp\left( 2\mathrm{Re}\langle \kappa \cdot x_i -x, x - \lambda\cdot x \rangle / \varepsilon \right) = 1 - O\left(\frac{2\mathrm{Re}\langle \kappa \cdot x_i -x, x - \lambda\cdot x \rangle }{\varepsilon}\right).
\end{align}
Plugging Equations \eqref{eq: norm squared} and \eqref{eq: Taylor expansion} into \eqref{eq: def of F} gives
\begin{align}\label{eq: expansion of F_i, kappa in two terms}
\begin{split}
    F_{i,\kappa}(x) = \exp\left(-|| \kappa \cdot x_i -x ||^2/\varepsilon\right) \Bigg[&\int_K \exp\left(-|| x - \lambda\cdot x||^2/ \varepsilon\right) f(\lambda\cdot x) d\lambda \\
    &- O\left(\frac{1}{\varepsilon} \int_K  2\mathrm{Re}\langle \kappa \cdot x_i -x, x - \lambda\cdot x \rangle  \exp\left(-|| x - \lambda\cdot x||^2/ \varepsilon\right) f(\lambda\cdot x) d\lambda \right)\Bigg].
\end{split}    
 \end{align}
Hence, we need to find asymptotic expansions for each of the two terms in Equation \eqref{eq: expansion of F_i, kappa in two terms}. Let us denote by $\mathcal{O}(x) = \{ \lambda\cdot x \mid \lambda \in K \}$ the orbit of $x \in \mathcal{M'}$ under the action of $K$, which is a smooth manifold of dimension equal to $ \mathrm{dim}(K)$. Using this, we can rewrite the integrals over $K$ in Equation \eqref{eq: expansion of F_i, kappa in two terms} as integrals over $\mathcal{O}(x)$. For the first integral, we obtain
\begin{align}\label{eq: integral exp f}
\begin{split}
    \int_K \exp\left(-|| x - \lambda\cdot x||^2/ \varepsilon\right) f(\lambda\cdot x) d\lambda &= \frac{1}{\mu(x)} \int_{\mathcal{O}(x)} \exp\left(-|| x - y||^2/ \varepsilon\right) f(y) dy, 
\end{split}
\end{align}
where $\mu(x) = |D\varphi_x|$ is the Jacobian determinant of the map $\varphi_x \colon K \to \mathcal{O}(x)$ defined by $\varphi_x(\lambda) = \lambda\cdot x$, which is a diffeomorphism \cite[Proposition 21.7]{lee2012smooth}. Next, applying Proposition 9 in \cite{SteerablePaper} allows us to approximate the left-hand-side integral in Equation \eqref{eq: integral exp f} as
\begin{equation}\label{eq: final integral exp f}
     \int_K \exp\left(-|| x - \lambda\cdot x||^2/ \varepsilon\right) f(\lambda\cdot x) d\lambda = \frac{(\pi \varepsilon)^ {\mathrm{dim}(K)/2}}{\mu(x)}[f(x) + O(\varepsilon)].
\end{equation}
We do the same for the second integral in \eqref{eq: expansion of F_i, kappa in two terms}, which gives
\begin{align}\label{eq: integral Re exp f}
\begin{split}
    O&\left(\frac{1}{\varepsilon} \int_K  2\mathrm{Re}\langle \kappa \cdot x_i -x, x - \lambda\cdot x \rangle  \exp\left(-|| x - \lambda\cdot x||^2/ \varepsilon\right) f(\lambda\cdot x) d\lambda \right)\\
    &= O\left(\frac{1}{\varepsilon \mu(x)} \int_{\mathcal{O}(x)}  2\mathrm{Re}\langle \kappa \cdot x_i -x, x -  y\rangle  \exp\left(-|| x - y||^2/ \varepsilon\right) f(y) dy \right)\\
    &=O\left(\frac{(\pi \varepsilon)^ {\mathrm{dim}(K)/2}}{ \mu(x)} \left[ \frac{1}{2} \Delta_{\mathcal{O}(x)}(\mathrm{Re}\langle \kappa \cdot x_i -x, x -  y\rangle f(y))\mid_{y=x} + \varepsilon \right] \right).
\end{split}    
\end{align}
Note that
\begin{align} 
    \begin{split}
        \Delta_{\mathcal{O}(x)}(\mathrm{Re}\langle \kappa \cdot x_i -x, x -  y\rangle f(y))\mid_{y=x} = ~&\Delta_{\mathcal{O}(x)}f(y)\vert_{y=x} \mathrm{Re}\langle \kappa \cdot x_i -x, x -  x\rangle \\
        &-2 \langle \nabla_{\mathcal{O}(x)}\mathrm{Re}\langle \kappa \cdot x_i -x, x -  y\rangle\vert_{y=x}, \nabla_{\mathcal{O}(x)}f(z) \rangle \\
        &+ f(x)\Delta_{\mathcal{O}(x)}\mathrm{Re}\langle \kappa \cdot x_i -x, x -  y\rangle\vert_{y=x}  \\ 
        =  & ~ 2 \Big\langle \mathrm{Re} \langle \kappa \cdot x_i -x, \nabla_{\mathcal{O}(x)} y \vert_{y=x} \rangle, \nabla_{\mathcal{O}(x)}f(z) \Big\rangle \\
        &- f(x)  \mathrm{Re}\langle \kappa \cdot x_i -x, \Delta_{\mathcal{O}(x)} y\vert_{y=x} \rangle.
    \end{split}
\end{align} 
To simplify the notation, let us define
\[ q(x) =  \Big\langle \mathrm{Re} \langle \kappa \cdot x_i -x, \nabla_{\mathcal{O}(x)} y \vert_{y=x} \rangle, \nabla_{\mathcal{O}(x)}f(z) \Big\rangle - \frac{f(x)}{2} \mathrm{Re}\langle \kappa \cdot x_i -x, \Delta_{\mathcal{O}(x)} y\vert_{y=x} \rangle. \]
This function satisfies $q(\kappa \cdot x_i) = 0$. Therefore, Equation \eqref{eq: integral Re exp f} can be written as
\begin{align}\label{eq: final integral Re exp f}
    O&\left(\frac{1}{\varepsilon} \int_K  2\mathrm{Re}\langle \kappa \cdot x_i -x, x - \lambda\cdot x \rangle  \exp\left(-|| x - \lambda\cdot x||^2/ \varepsilon\right) f(\lambda\cdot x) d\lambda \right) = O\left( \frac{(\pi \varepsilon)^ {\mathrm{dim}(K)/2}}{ \mu(x)}[q(x) + \varepsilon] \right).
\end{align}
Plugging in Equations \eqref{eq: final integral Re exp f} and \eqref{eq: final integral exp f} into \eqref{eq: expansion of F_i, kappa in two terms} yields the asymptotic expansion
\begin{align*}
    F_{i,\kappa}(x) = \exp\left(-|| \kappa \cdot x_i -x ||^2/\varepsilon\right) \frac{(\pi \varepsilon)^ {\mathrm{dim}(K)/2}}{\mu(x)}[f(x) + O(q(x)) + O(\varepsilon)]. 
\end{align*}
From here we directly obtain an approximation for $(F_{i,\kappa}(x))^2$, namely
\begin{align}\label{eq: approx F squared}
    (F_{i,\kappa}(x))^2 = \exp\left(-2|| \kappa \cdot x_i -x ||^2/\varepsilon\right) \frac{(\pi \varepsilon)^ {\mathrm{dim}(K)}}{\mu^2(x)}[f^2(x) + O(q(x)) + O(\varepsilon)].
\end{align}
The final step is to use Formula \eqref{eq: approx F squared} to estimate the expected value of $F_{i,\kappa}^2$. To integrate this function over $\mathcal{M}$, consider the quotient map $\pi \colon \mathcal{M}^\prime \to \mathcal{M}^\prime/K$, which is a smooth submersion by the quotient manifold theorem \cite{lee2012smooth}; moreover, $\mathcal{M}^\prime/K$ is a smooth manifold of dimension equal to $d - \mathrm{dim}(K)$. It inherits a natural Riemannian metric such that if $g$ is an integrable function on $\mathcal{M}^\prime/K$, then it holds that
\begin{align*}
    \int_{\mathcal{M}^\prime} \pi^\ast g ~\mu_{\mathcal{M}^\prime} = \int_{\mathcal{M}^\prime/K} g ~ \pi_\ast\mu_{\mathcal{M}^\prime},
\end{align*}
where $\mu_{\mathcal{M}^\prime}$ is the induced measure on $\mathcal{M}^\prime$ and $\pi^\ast$, $\pi_\ast$ denote pullback and pushforward respectively. 
To apply this identity, note that the function $F_{i,\kappa}$ is invariant under the action of $K$. Indeed, for any $x \in \mathcal{M}$ and $\eta \in K$ we have that
\begin{align*}
    F_{i, \kappa}(\eta\cdot x) &= \int_K \exp\left(-|| \kappa \cdot x_i - \lambda\cdot \eta\cdot x ||^2/\varepsilon\right) f(\lambda\cdot \eta\cdot x) d\lambda\\
    &= \int_K \exp\left(-|| \kappa \cdot x_i - \lambda^\prime \cdot x ||^2/\varepsilon\right) f(\lambda^\prime\cdot x) d(\eta^{-1}\lambda^\prime)\\
    &= \int_K \exp\left(-|| \kappa \cdot x_i - \lambda^\prime \cdot x ||^2/\varepsilon\right) f(\lambda^\prime\cdot x) d(\lambda^\prime)\\
    &= F_{i, \kappa}(x).
\end{align*}
It follows that $F_{i, \kappa} = \pi^\ast g$ for some function $g$ on $\mathcal{M}^\prime/K$.
Let $\nu$ be the function on $\mathcal{M}^\prime$ given by
$\nu(x) \colonequals \sqrt{|\mathrm{det}(g_\mathcal{M}^\prime)(x)|}$, where $g_\mathcal{M}^\prime$ is the metric on $\mathcal{M}^\prime$, comes from the volume form of the manifold $\mathcal{M}^\prime$. This and Formula \eqref{eq: approx F squared} yield
\begin{align*}
    \mathbb{E}\left[ (F_{i,\kappa}(x))^2\right] &= \frac{1}{\mathrm{Vol}(\mathcal{M})} \int_\mathcal{M} (F_{i,\kappa}(x))^2 dx \\
    &= \frac{1}{\mathrm{Vol}(\mathcal{M})} \int_{\mathcal{M}^\prime/K} (F_{i,\kappa}( x))^2 \nu(x) dx  \\
    &=\frac{1}{\mathrm{Vol}(\mathcal{M})} (\pi \varepsilon)^ {\mathrm{dim}(K)} \int_{\mathcal{M}^\prime/K} \exp\left(-2|| \kappa \cdot x_i -x ||^2/\varepsilon\right) \frac{\nu(x)}{\mu^2(x)}[f^2(x) + O(q(x) + \varepsilon)] dx. \numberthis \label{eq: internal over N}
\end{align*}
Again, we evaluate all terms in the  integral \eqref{eq: internal over N} using Proposition 9 in \cite{SteerablePaper}. We obtain
\begin{align}\label{eq: first term int over N}
    \int_{\mathcal{M}/K} \exp\left(-2|| \kappa \cdot x_i -x ||^2/\varepsilon\right) \frac{\nu(x) f^2(x)}{\mu^2(x)} dx = {(\pi \varepsilon/2)^{(d - \mathrm{dim}(K))/2}} \left[ \frac{ \nu(\kappa \cdot x_i)f^2(\kappa \cdot x_i)}{\mu^2(\kappa \cdot x_i)} + O(\varepsilon) \right],
\end{align}
and
\begin{align}\label{eq: second term int over N}
    O\left(\int_{\mathcal{M}/K} \exp\left(-2|| \kappa \cdot x_i -x ||^2/\varepsilon\right) \frac{\nu(x)q(x) }{\mu^2(x)} dx\right) &= O\left({(\pi \varepsilon)^{(d - \mathrm{dim}(K))/2}}\left[ \frac{ \nu(\kappa \cdot x_i) q(\kappa \cdot x_i)}{\mu^2(\kappa \cdot x_i)} + \varepsilon \right] \right)\\
    &= {(\pi \varepsilon/2)^{(d - \mathrm{dim}(K))/2}} O(\varepsilon)  
\end{align}
since $q(\kappa \cdot x_i)= 0$. By plugging Equations \eqref{eq: first term int over N} and \eqref{eq: second term int over N} into \eqref{eq: internal over N}, it follows that
\begin{align}
\begin{split}
   \mathbb{E}\left[ (F_{i,\kappa}(x))^2\right] 
    &= \frac{1}{\mathrm{Vol}(\mathcal{M})} (\pi \varepsilon)^ {\mathrm{dim}(K)} {(\pi \varepsilon/2)^{(d - \mathrm{dim}(K))/2}} \left[ \frac{ \nu(\kappa \cdot x_i)f^2(\kappa \cdot x_i)}{\mu^2(\kappa \cdot x_i)} + O(\varepsilon) \right] \\
    &= \frac{1}{\mathrm{Vol}(\mathcal{M})} \frac{(\pi \varepsilon)^{(d + \mathrm{dim}(K))/2}}{2^{(d - \mathrm{dim}(K))/2}} \left[ \frac{ \nu(\kappa \cdot x_i)f^2(\kappa \cdot x_i)}{\mu^2(\kappa \cdot x_i)} + O(\varepsilon) \right], 
\end{split}
\end{align}
concluding the proof.
\end{proof}

\section{Concluding remarks}
In this paper, we developed the framework of $K$-invariant graph Laplacian operators $L$ for manifold learning under group actions.  
Our work extends a method of Landa and Shkolnisky from the 2D rotation group to the case of an arbitrary compact Lie group $K$.  
The $K$-invariant graph Laplacian provides a principled approach to manifold learning when the data manifold is invariant under the action of the group $K$.
The method achieves this by carrying out infinite data augmentation, through analytic integration over the orbits of the data points.  
In Theorem~\ref{thm:eigen} we provided an explicit formula for computing the eigenvalues and eigenfunctions of $L$ using the representation theory of $K$. Moreover, in Theorem~\ref{thm:rate}, we proved that the normalized $K$-invariant graph Laplacian operator $L_N$ converges to the Laplace-Beltrami operator $\Delta_\mathcal{M}$ of the data manifold $\mathcal{M}$, at a faster rate than the symmetry-unaware graph Laplacian.  In particular, the variance term involves the dimension of $K$. 
In practice, this result will yield an improvement in sample complexity. 

There are several directions worthy of future work. First and foremost, we wish to use the theoretical apparatus of this paper in a real-life application.  To this end, mapping out low-dimensional models for molecular conformation spaces in cryo-electron tomography \cite{wan2016cryo} is a target application.  In this case, data points are naturally subject to an $\operatorname{SO}(3)$-action. 
Secondly, a computational challenge in implementing the approach will come from implementing the Fourier transform of each $W_{ij}(\kappa, \lambda)$ from the block matrices $\widehat{W}^\ell$ since, depending on the group, the $\mathrm{dim}E_\ell$ could grow faster than desired \cite{chirikjian2016harmonic}.
Thirdly, it is natural to ask about spectral convergence \cite{calder2022improved} rather than the pointwise convergence established here. 
That is, we would like to show the convergence of the eigenfunctions of the $K$-invariant graph Laplacian operator to the eigenfunctions of the Laplace-Beltrami operator on $\mathcal{M}$.  

\bibliographystyle{abbrv}
\bibliography{references}
\end{document}